\RequirePackage[svgnames]{xcolor}

\documentclass[11pt, letterpaper, logo]{merged_style}

\usepackage[comma,authoryear,compress]{natbib}
\usepackage{algorithm}
\usepackage{algorithmic}
\usepackage{mathtools}
\usepackage{nicefrac}
\usepackage{subcaption}
\usepackage{wrapfig}
\usepackage{multirow}
\usepackage{makecell}
\usepackage{placeins}

\newcommand{\method}{DemoPSD}
\newcommand{\student}{\pi_\theta}

\newcommand{\EE}{\mathbb{E}}
\newcommand{\KL}{\mathrm{KL}}
\newcommand{\Ent}{\mathcal{H}}
\DeclareMathOperator*{\argmin}{arg\,min}

\bannerlogos{%
\hfill
  \includegraphics[height=11mm]{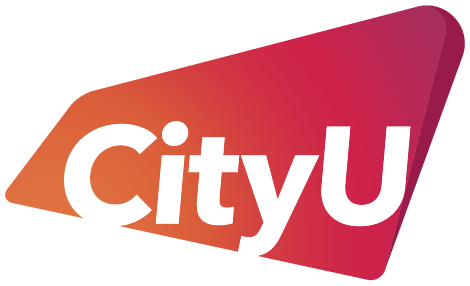}\hspace{10pt}%
  \includegraphics[height=12mm]{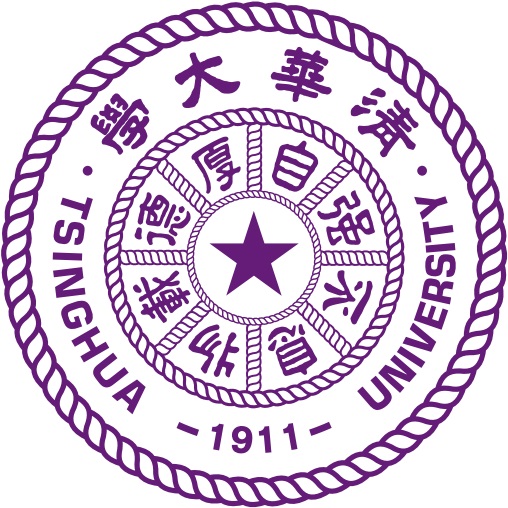}\hspace{10pt}%
  \includegraphics[height=12mm]{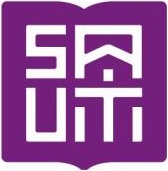}\hspace{10pt}%
  \includegraphics[height=12mm]{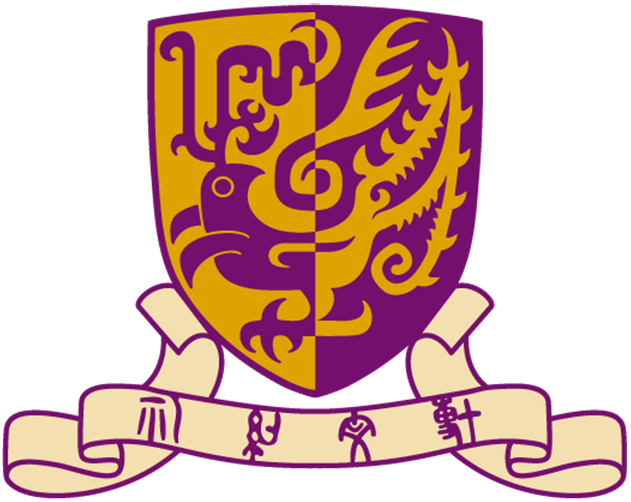}%
\vspace{-1.1cm}
}

\title{DemoPSD: Disagreement-Modulated Policy\\Self-Distillation}

\runningtitle{DemoPSD: Disagreement-Modulated Policy Self-Distillation}

\author{\begin{minipage}{\textwidth}\centering\fontsize{13}{14}\selectfont\bfseries Yunhe Li\textsuperscript{\normalfont *,1}, \quad Hao Shi\textsuperscript{\normalfont *,2}, \quad Wenhao Liu\textsuperscript{\normalfont 2}, \quad Mengzhe Ruan\textsuperscript{\normalfont 1}, \quad Hanxu Hou\textsuperscript{\normalfont 3}\par Zhongxiang Dai\textsuperscript{\normalfont 4}, \quad Shuang Qiu\textsuperscript{\normalfont \textdagger,1}, \quad Linqi Song\textsuperscript{\normalfont \textdagger,1}\par\vspace{5pt}{\fontsize{11}{13}\selectfont\normalfont\textsuperscript{\normalfont 1}City University of Hong Kong\quad \textsuperscript{\normalfont 2}Tsinghua University\par
\textsuperscript{\normalfont 3}Shenzhen University of Advanced Technology\quad \textsuperscript{\normalfont 4}Chinese University of Hong Kong, Shenzhen\par\vspace{5pt}\normalfont 
\texttt{uuen.li@my.cityu.edu.hk} \quad \texttt{shih22@mails.tsinghua.edu.cn} \quad \texttt{	\{shuanqiu,linqi.song\}@cityu.edu.hk}}
\end{minipage}}

\begin{abstract}
On-policy self-distillation (OPSD) has emerged as a practical method for training large language models (LLMs) to reason, where a single model acts as both the teacher and the student with different levels of information access. 
However, recent studies have found that the teacher's dense token-level supervision, conditioned on privileged information, can lead to overfitting to in-domain patterns, suppress exploration, and hurt cross-domain generalization, while also introducing a more fundamental issue: \textit{privileged information leakage}, where the student encodes answer-dependent shortcuts that are unavailable at test time.
We introduce \textbf{\method{}}, a novel framework that resolves such problems through the idea of \emph{selective adoption of teacher guidance}: the student adopts the teacher's guidance when their distributions remain reasonably consistent, and relies more on its own reasoning when their distributions substantially diverge, indicating that the teacher's output is overly influenced by privileged information. Instead of fitting the full teacher distribution, \method{} steers the student toward a \emph{reverse-KL barycenter target}, a weighted geometric combination of the teacher and student distributions, that naturally balances learning from the teacher with preserving the student's own reasoning capacity. We measure the difference between their distributions and use such a discrepancy to adaptively control the blending at each token position.
We provably show that \method{} achieves \textbf{(1)}~\emph{leakage attenuation}, i.e., effective mitigation of privileged information leakage; and \textbf{(2)}~\emph{exploration preservation}, i.e., preservation of exploration capacity under dense token-level distillation. Extensive experiments on SciKnowEval across four scientific fields show that \method{} outperforms both GRPO and SDPO  while maintaining higher training entropy and robustly generalizing to out-of-distribution GPQA benchmarks.
\end{abstract}

\begin{document}

\maketitle
\begingroup
\renewcommand{\thefootnote}{}\NoHyper\footnotetext{\begin{tabular}[t]{@{}l@{}}\textsuperscript{*}Equal contribution.\\
\textsuperscript{\textdagger}Corresponding author.\end{tabular}}\endNoHyper
\endgroup

\begin{figure}[t]
    \centering
    \begin{subfigure}[t]{0.48\textwidth}
        \centering
        \includegraphics[width=\textwidth]{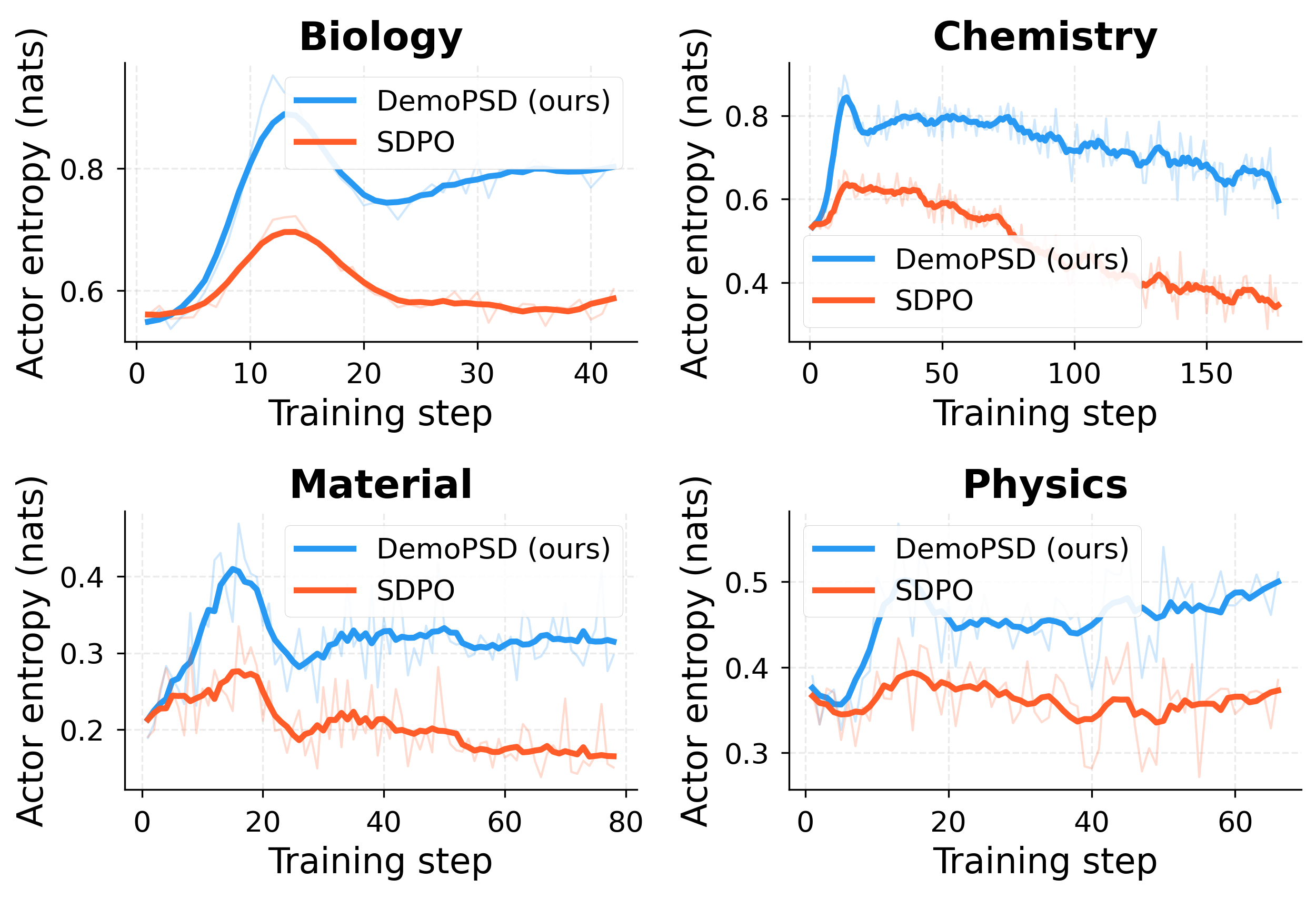}
        \caption{{Policy entropy over training steps.} \method{} maintains 33-98\% higher entropy than SDPO across all domains, avoiding policy entropy collapse.}
        \label{fig:entropy-curves}
    \end{subfigure}
    \hfill
    \begin{subfigure}[t]{0.49\textwidth}
        \centering
        \includegraphics[width=\textwidth, height=0.257\textheight]{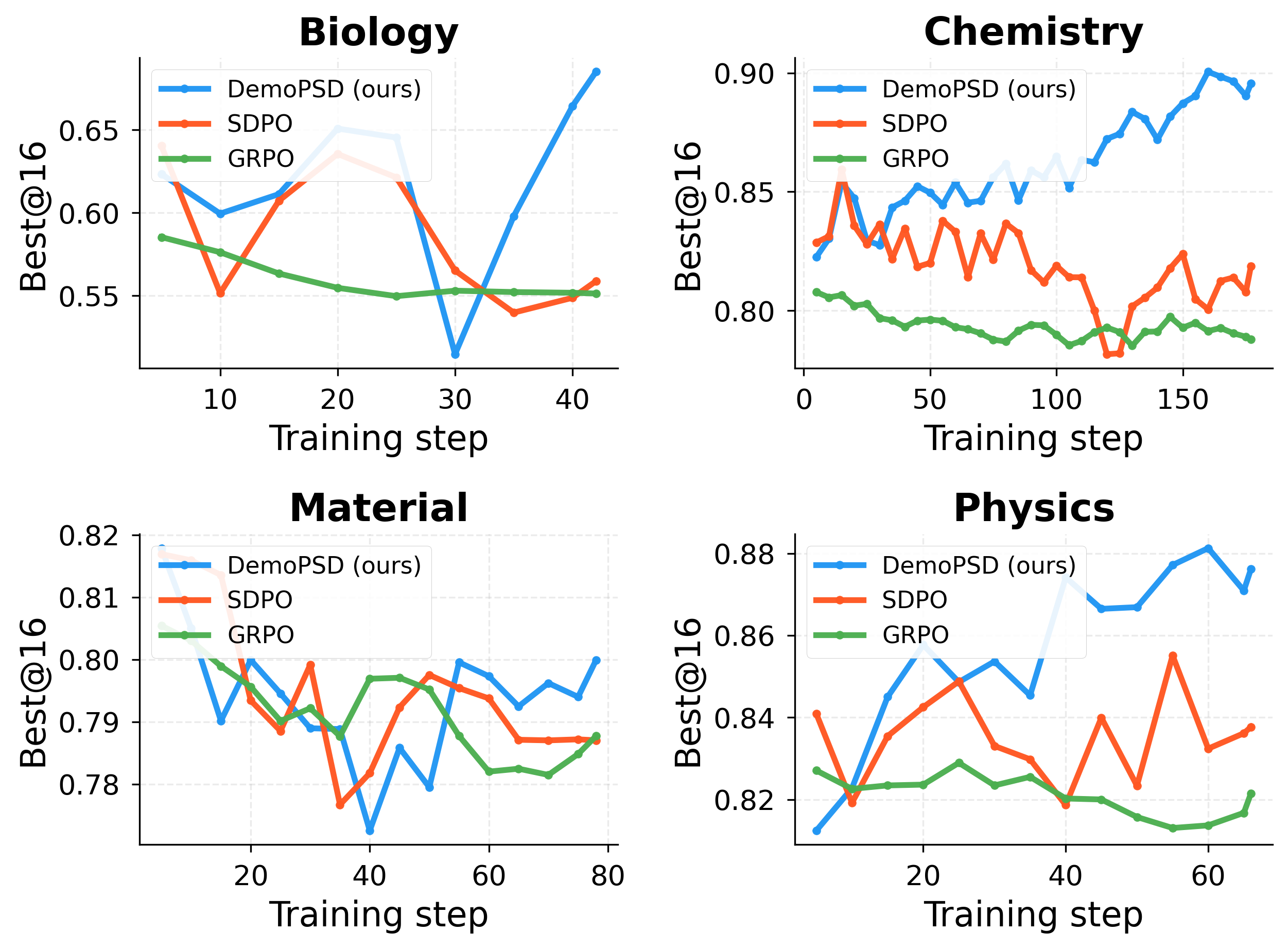}
        \caption{{Best@16 for each SciKnowEval  domain over training steps.}}
        \label{fig:best16-curves}
    \end{subfigure}
    \caption{\method{} preserves higher entropy (left), which translates into better best@16 performance (right).}
    \label{fig:highlight}
\end{figure}

\section{Introduction}
\label{sec:intro}

Reinforcement learning with verifiable rewards (RLVR) has become a central paradigm for post-training large language models on reasoning tasks~\citep{shao2024deepseekmath,deepseek2025r1,yu2026dapo}. Methods such as Group Relative Policy Optimization (GRPO) train models by sampling multiple rollouts per question and using outcome correctness as a reward signal. While effective, RLVR suffers from a fundamental \emph{credit assignment bottleneck}: standard RLVR methods distribute a rollout-level reward uniformly among all tokens in a rollout, offering coarse token-level credit signals that fail to distinguish individual token contributions~\citep{hubotter2026sdpo}.

On-policy distillation (OPD) addresses this bottleneck by introducing dense, token-level supervision from a teacher model on the student's self-generated trajectories~\citep{agarwal2024policy,gu2024minillm,lu2025opd}. Unlike off-policy distillation, which trains on teacher-generated texts but suffers from exposure bias~\citep{ross2011dagger,song2026survey}, OPD allows the student to learn from its own distribution while receiving dense teacher supervision. This paradigm has been widely adopted in industry, including Qwen3~\citep{qwen3} and DeepSeek-V4~\citep{deepseekv4}, establishing OPD as a practical complement to RLVR. The broader idea of transferring the knowledge of one neural network into another has earlier roots in \citet{schmidhuber1991neural,schmidhuber1992learning}, which proposed collapsing a self-organizing predictor hierarchy into a single recurrent network.

A particularly appealing variant is on-policy self-distillation (OPSD) ~\citep[e.g.,][]{zhao2026opsd,hubotter2026sdpo}, where a single model serves as both teacher and student. The teacher is the same model conditioned on privileged information, such as a verified reasoning trace or ground-truth answer, while the student receives only the question. OPSD eliminates the need for an external teacher and has demonstrated several-fold improvements in token efficiency over GRPO~\citep{zhao2026opsd,shenfeld2026self}. However, recent theoretical and empirical analysis has revealed a critical failure mode: \emph{privileged information leakage}~\citep{yang2026rlsd}. 
Because the teacher conditions on privileged information $y^*$ that the student never observes at test time, the OPSD objective contains an irreducible mutual information gap $I(y_t; y^* \mid x, y_{<t}) > 0$, which is a conditional mutual information with $x$ the input question, $y_{<t}$ the generated prefix, $y_t$ the next token, and $y^*$ the privileged information available only to the teacher. 
A positive value indicates that, even after conditioning on the question and generated prefix, the privileged signal still provides additional information about the next token, drving the student to encode answer-dependent shortcuts. This manifests as early performance gains followed by gradual degradation. As a result, the student may internalize cues tied to the privileged information instead of acquiring transferable reasoning strategies~\citep{yang2026rlsd}.

This failure mode reflects a broader tension between benefiting from the teacher's guidance and preserving the student's ability to reason independently.
Privileged information leakage is a symptom of a more fundamental design choice: standard OPSD optimizes the student to imitate the teacher's privileged conditional distribution at every token. This objective is problematic for two major reasons: first, the teacher's distribution at certain positions (e.g., numerical answers, solution-revealing steps) is shaped by privileged information rather than transferable reasoning; second, directly matching the teacher's distribution at every token can suppress the student's own reasoning capacity when privileged information is unavailable.

Several recent studies have proposed different mechanisms to address this challenge. RLSD~\citep{yang2026rlsd} avoids leakage entirely by using self-distillation only for magnitude estimation. HDPO~\citep{ding2026hdpo} restricts privileged distillation to ``cliff prompts.'' EGRSD~\citep{ke2026egrsd} gates distillation by teacher entropy. SRPO~\citep{li2026srpo} routes entire samples between GRPO and SDPO based on rollout correctness. DASD~\citep{zhang2026dasd} uses entropy-routed directional supervision, pulling low-entropy tokens toward the privileged teacher while pushing high-entropy tokens away to preserve exploration. GATES~\citep{stein2026gates} uses consensus among multiple teachers. The aforementioned methods share a common intuition: \emph{not all tokens are equally trustworthy}. Yet they all rely on \emph{indirect} proxies such as the teacher's entropy, sample correctness, student entropy, or multi-teacher consensus rather than directly measuring \emph{how much the teacher's prediction is influenced by privileged information}. How to design the distillation target itself to balance teacher-guided learning with the student's own reasoning, however, has received relatively little attention.

Our work introduces \method{}, \textbf{\underline{D}}isagr\textbf{\underline{e}}ement-\textbf{\underline{m}}\textbf{\underline{o}}dulated \textbf{\underline{P}}olicy \textbf{\underline{S}}elf-\textbf{\underline{D}}istillation, a novel framework that addresses this challenge in standard OPSD through the principle of \emph{selective adoption of teacher guidance}: the student adopts the teacher's guidance when their distributions are reasonably consistent, and relies more on its own reasoning when the teacher's distribution substantially diverges from the student's, indicating that the teacher’s output is overly influenced by privileged information. Rather than fitting the full teacher distribution, \method{} trains the student toward a \emph{reverse-KL barycenter target}, which is a weighted geometric combination of the teacher's and student's distribution:
\begin{align}
\begin{aligned}\label{eq:intro-target1}
    \pi_t^{\text{target}}(v \mid x, y^*, \hat{y}_{<t}) &\propto \big(\pi_{\text{teacher}}(v \mid x, y^*, \hat{y}_{<t} )\big)^{1-\alpha_t} \cdot \big(\pi_{\text{student}}(v \mid x, \hat{y}_{<t})\big)^{\alpha_t},
\end{aligned}
\end{align}
where $\alpha_t$ is a per-token \textit{leakage attenuation coefficient} determined by the disagreement between the distributions of the teacher and the student, controlling how far the target is interpolated from the teacher's distribution toward the student's distribution. When $\alpha_t$ is sufficiently small, the privileged information $y^*$ does not substantially shift the teacher's distribution, the target therefore remains close to the teacher. As $\alpha_t$ increases, the teacher's distribution becomes more strongly shaped by $y^*$. Forcing the student to directly match the teacher would encode answer-dependent shortcuts into the student, which is precisely the privileged information leakage. The target in \eqref{eq:intro-target1} is therefore interpolated further toward the student's distribution to attenuate leakage while preserving the student's unprivileged reasoning capacity. Figure~\ref{fig:highlight} previews our main empirical results based on the principle of selective adoption of teacher guidance. \method{} preserves substantially higher training entropy 
than SDPO across all domains, which further translates into improved best@16 results.

\noindent\textbf{Our contribution.} Specifically, our main contributions are three-fold:

\begin{enumerate}[leftmargin=*,itemsep=3pt]
\item We propose a novel on-policy self-distillation algorithm \method{} that effectively prevents the student model from overfitting the teacher's distribution, thereby improving both in-domain and cross-domain reasoning capabilities and reducing privileged information leakage during self-distillation.

    \item We theoretically prove two key properties of \method{}: \textbf{(1)} \emph{leakage attenuation}, i.e., the disagreement-weighted reverse-KL barycenter target reduces the rate of privileged information leakage; and \textbf{(2)} \emph{exploration preservation}, i.e., \method{} retains the student's exploration capacity under dense token-level distillation.
    \item We conduct extensive experiments on SciKnowEval across four scientific domains. Our empirical results show that \method{} consistently outperforms SDPO and GRPO, achieving up to a 4.2\% improvement in @16 accuracy while maintaining 35--97\% higher training entropy. On the out-of-distribution GPQA benchmark, \method{} maintains strong overall accuracy and demonstrates robust generalization, whereas SDPO exhibits a gradual performance decline.
\end{enumerate}

\section{Related Work}
\label{sec:related}

\noindent\textbf{On-Policy Distillation and Self-Distillation.}
Recent OPD methods such as GKD~\citep{agarwal2024policy} and MiniLLM~\citep{gu2024minillm} train the student on model-generated trajectories while using teacher distributions as dense supervision. This on-policy design is motivated by the classic imitation-learning observation that training only on expert-generated states can suffer from compounding errors under distribution shift~\citep{ross2011dagger}. Subsequent work further studies OPD from different perspectives: REOPOLD~\citep{ko2026reopold} relaxes on-policy distillation for more efficient reasoning, Veto~\citep{jang2026veto} reformulates the distillation target to improve training stability, and \citet{song2026opdsurvey} provide a broader survey of OPD methods for large language models.
Another line of research focuses on the on-policy self-distillation problem. Self-Distilled Reasoner~\citep{zhao2026opsd} studies the setting where a single model provides its own on-policy distillation signal for reasoning. SDPO~\citep{hubotter2026sdpo} further frames reinforcement learning through self-distillation, converting sparse outcome feedback into dense training signals. Related variants explore complementary design choices: SD-Zero~\citep{he2026sdzero} uses self-revision to transform binary rewards into dense supervision, UniSD~\citep{jin2026unisd} proposes a unified framework for self-distillation in LLMs, and CRISP~\citep{sang2026crisp} applies iterative self-policy distillation to compressed reasoning. As shown in \S\ref{subsec:loss-gradient}, \method{} instead uses a disagreement-dependent geometric target that preserves dense supervision on low-disagreement tokens while attenuating teacher-induced signals on high-disagreement tokens.

\vspace{3pt}
\noindent\textbf{Addressing Privileged Information Leakage}
Recent work has begun to analyze and mitigate failure modes in on-policy self-distillation. \citet{yang2026rlsd} study self-distilled RLVR and identify privileged information leakage as a key concern. HDPO~\citep{ding2026hdpo} focuses privileged self-distillation on cliff prompts, while PBSD~\citep{yu2026pbsd} moves beyond direct KL matching through preference-based self-distillation and reward regularization.
Other methods adjust when or how self-distillation is applied. SRPO~\citep{li2026srpo} unifies group-relative optimization and self-distillation through sample routing, DASD~\citep{zhang2026dasd} adapts supervision according to the direction of the self-distillation signal, and PAINT~\citep{tan2026paint} interpolates between partial- and full-solution prompts. \citet{kim2026whydegrade} analyzes why self-distillation can degrade reasoning ability. In contrast, \method{} keeps the token-level distillation setting but changes the distributional target itself: the reverse-KL barycenter adaptively interpolates between the privileged teacher and the unprivileged student according to teacher-student disagreement.

\vspace{3pt}
\noindent\textbf{Mixture Distributions and Entropy Dynamics}
AMiD~\citep{shin2025amid} introduces $\alpha$-mixture assistant distributions for knowledge distillation, making it conceptually related to our reverse-KL barycenter target, although AMiD does not address privileged self-distillation. Entropy stability has also emerged as an important issue in large-scale RL training systems such as DAPO~\citep{yu2026dapo} and in explicit entropy-control methods such as EntroPIC~\citep{yang2025entropic}, motivating our focus on preserving exploration during dense distillation. PACED~\citep{xu2026paced} studies distillation and on-policy self-distillation at the frontier of student competence, which is complementary to our token-level disagreement-based target adaptation. 

\section{Background and Problem Setting}
\label{sec:background}

\subsection{Reinforcement Learning with Verifiable Rewards}
\label{subsec:rlvr}

We consider the standard RLVR setup for post-training LLMs. Given a dataset of questions $\mathcal{D} = \{(x_i, a_i^*)\}_{i=1}^N$ where $a_i^*$ is the verifiable answer, the model $\student(\cdot \mid x)$ generates rollouts $y \sim \student(\cdot \mid x)$ and receives a binary reward $r(y, a^*) \in \{0, 1\}$ based on outcome correctness. GRPO~\citep{shao2024deepseekmath} estimates advantages from these rewards within each rollout group and optimizes:
\begin{equation}
\label{eq:grpo}
\mathcal{L}_{\text{GRPO}}(\theta) = -\EE_{x \sim \mathcal{D}} \EE_{y \sim \student(\cdot|x)} \left[ \hat{A}(y) \cdot \log \student(y|x) \right] + \beta_{\text{KL}} \cdot \KL(\student \| \pi_{\text{ref}}),
\end{equation}
where $\hat{A}(y)$ is the group-relative advantage. For a group of $G$ rollouts $\{y_j\}_{j=1}^G$ sampled for the same question $x$, GRPO computes
\begin{equation}
\label{eq:grpo-advantage}
\hat{A}(y_j) = \frac{r(y_j, a^*) - \mu_r}{\sigma_r + \epsilon}, \qquad
\mu_r = \frac{1}{G}\sum_{k=1}^G r(y_k, a^*), \quad
\sigma_r = \sqrt{\frac{1}{G}\sum_{k=1}^G \big(r(y_k, a^*) - \mu_r\big)^2},
\end{equation}
with a small constant $\epsilon>0$ for numerical stability. The KL regularizer is defined as
\begin{equation}
\label{eq:grpo-kl}
\KL(\student \| \pi_{\text{ref}})
= \EE_{x \sim \mathcal{D}}\!\left[\sum_y \student(y \mid x)\log\frac{\student(y \mid x)}{\pi_{\text{ref}}(y \mid x)}\right]
= \EE_{x \sim \mathcal{D},\, y \sim \student}\!\left[\sum_{t=1}^{|y|}\log\frac{\student(y_t \mid x,y_{<t})}{\pi_{\text{ref}}(y_t \mid x,y_{<t})}\right].
\end{equation}
One of the fundamental limitations is that $r$ provides only an outcome reward per response, offering no guidance on which tokens contributed more to success or failure.

\subsection{On-Policy Self-Distillation}
\label{subsec:opsd}

Reinforcement learning via self-distillation (SDPO)~\citep{hubotter2026sdpo} addresses the credit assignment bottleneck by introducing dense, token-level supervision from a privileged version of the same model. The teacher $\pi_{\theta}(\cdot \mid x, y^*)$ is the current model conditioned on both the question $x$ and privileged information $y^*$ (e.g., a verified reasoning trace or ground truth), while the student $\student(\cdot \mid x)$ receives only the question. Given a student-generated rollout $\hat{y} \sim \student(\cdot \mid x)$, the SDPO objective minimizes per-token divergence:
\begin{equation}
\label{eq:opsd}
\mathcal{L}_{\text{SDPO}}(\theta) = \EE_{x \sim \mathcal{D}} \EE_{\hat{y} \sim \student(\cdot|x)} \left[ \sum_{t=1}^{|\hat{y}|} \KL\big(\student(\cdot \mid x, \hat{y}_{<t}) \,\|\, \text{stopgrad}(\pi_\theta(\cdot \mid x, y^*, \hat{y}_{<t}))\big) \right].
\end{equation}
The key insight is that the teacher leverages its  access to the privileged $y^*$ to provide richer feedback than an outcome reward. The stopgrad operator prevents gradients from flowing into the teacher, which keeps the teacher from shifting toward the student and  ignoring $y^*$.

\subsection{The Privileged Information Leakage Problem}
\label{subsec:leakage-problem}

While SDPO achieves impressive token efficiency, \citet{yang2026rlsd} proved that the setting is fundamentally ill-posed. Since the teacher conditions on privileged information $y^*$ that the student cannot observe, the SDPO objective contains an irreducible mutual information gap:
\begin{equation}
\label{eq:mi-gap}
I(y_t; y^* \mid x, y_{<t}) > 0.
\end{equation}
This gap implies the student can never perfectly achieve the teacher's conditional distribution, regardless of capacity. At the gradient level,  per-sample gradients include an $y^*$-specific deviation whose variance is proportional to this mutual information. At the early stage of training, the beneficial gradient component dominates, producing rapid training reward improvement. However, as the student approaches the teacher's marginal distribution, the deviation takes over, driving the student to encode $x \to y^*$ correlations, which is exactly the privileged information leakage explained in ~\citet{yang2026rlsd}. Empirically, SDPO performance peaks early and then gradually degrades during the remaining training stage.

The leakage problem points to a deeper issue: the teacher's distribution is not always an appropriate target for direct fitting. Even if leakage could be eliminated, a student who exactly replicates the teacher has lost its own capacity for reasoning. What we need instead is a training target that adaptively incorporates the teacher's guidance while preserving the student's own reasoning ability.

\section{The Proposed Method: \method{}}
\label{sec:method}

This section presents the proposed method \method{}, built on the principle of selective adoption of teacher guidance, i.e., the student follows the teacher's guidance when privileged information does not heavily distort the teacher's distribution so that it diverges substantially from the student's distribution. Below, we first describe how to measure teacher-student disagreement (\S\ref{subsec:contrast}), then introduce the reverse-KL barycenter target that implements selective adoption (\S\ref{subsec:geometric}), derive its loss and gradient (\S\ref{subsec:loss-gradient}), and describe the full training procedure (\S\ref{subsec:training}).


\subsection{Measuring Teacher-Student Disagreement}
\label{subsec:contrast}

The key ingredient of \method{} is measuring the \emph{disagreement} between the teacher's and student's predictions at each token position: one made \emph{with} privileged information, and one made \emph{without}. Token positions where these predictions remain reasonably consistent are likely to reflect transferrable knowledge that the student can safely adopt, while positions where they substantially disagree indicate that the teacher's output has been overly influenced by privileged information.





\vspace{3pt} \noindent\textbf{Disagreement and Leakage Attenuation Coefficient.} At each token position $t$, the privileged teacher's prediction is obtained by conditioning the model on the question $x$, the privileged information $y^*$, and the student's rollout prefix $\hat{y}_{<t}$. For notational convenience, we write this distribution as $\pi_T^t(v, y^*)$ as shorthand for $\pi_\theta(v \mid x, y^*, \hat{y}_{<t})$. The corresponding student's prediction conditions only on $x$ and $\hat{y}_{<t}$, and we write it as $\pi_S^t(v)$ as shorthand for $\pi_\theta(v \mid x, \hat{y}_{<t})$. We use these shorthand notations when no ambiguity arises and revert to the full conditional form when the conditioning context needs to be made explicit. The privileged prediction provides a rich teacher signal because it receives $y^*$, while the student's prediction serves as the reference distribution for evaluating disagreement. We describe how $y^*$ is incorporated into the model's context in \S\ref{subsec:training}. In practice, for training stability, we use a separate exponential moving average (EMA) copy of the student when computing the disagreement in \eqref{eq:js-divergence} and the target distribution in \eqref{eq:target-unnorm}; implementation details are summarized in Algorithm~\ref{alg:dupsd}. We measure the disagreement $d_t$ between $\pi_T$ and $\pi_S$ by using the Jensen-Shannon divergence (JSD):
\begin{equation}
\label{eq:js-divergence}
d_t = \mathrm{JSD}(\pi_S^t \| \pi_T^t) = \frac{1}{2}\KL(\pi_S^t \| m_t) + \frac{1}{2}\KL(\pi_T^t \| m_t), \quad m_t = \frac{1}{2}(\pi_S^t + \pi_T^t).
\end{equation}
From $d_t$, we derive a leakage attenuation coefficient $\alpha_t = f(d_t)$ that controls how much the target shifts away from the privileged teacher and toward the student's own prediction. We require $f$ to be monotonically increasing so that larger teacher-student disagreement leads to stronger leakage attenuation, and to satisfy $f(0)=0$ so that the target reduces to the teacher distribution when the two predictions match. We also use a saturating form with $\lim_{d \to \infty} f(d)=\alpha_{\max}$, which prevents extreme disagreement from completely discarding the teacher signal. The cap $\alpha_{\max}$ is an empirical hyperparameter: setting it too large assigns too little weight to the teacher distribution and can weaken useful distillation signals. We realize $f$ via a rescaled sigmoid:
\begin{equation}
\label{eq:alpha-remap}
\alpha_t = \big(\sigma(\beta \cdot d_t) - 0.5\big) \cdot 2 \cdot \alpha_{\max},
\end{equation}
where $\beta$ controls the sensitivity of the gate to teacher-student disagreement: a larger $\beta$ makes $\alpha_t$ increase more sharply with small changes in $d_t$, causing the target to move away from the privileged teacher more aggressively, whereas a smaller $\beta$ yields a smoother transition and retains more teacher signal under moderate disagreement. This realization has two key properties: \textbf{(1)} When $\alpha_t$ is sufficiently small, i.e.,  the two distributions are reasonably consistent, it is safe to distill; \textbf{(2)} As $\alpha_t$ increases to $\alpha_{\max}$, i.e., they strongly disagree, distillation becomes increasingly risky.

\subsection{Reverse-KL Barycenter Target}
\label{subsec:geometric}

Given the coefficient $\alpha_t$, we define the distillation target as a geometric mixture of the two distributions. The  target at token position $t$ is:
\begin{equation}
\label{eq:target-unnorm}
\pi_{\text{target}}^{\alpha_t}(v \mid x, y^*, \hat{y}_{<t}) \propto \big(\pi_T^t(v, y^*)\big)^{1-\alpha_t} \cdot \big(\pi_S^t(v)\big)^{\alpha_t}.
\end{equation}
This distribution is the reverse-KL barycenter of the privileged teacher and the student distributions under the weight $\alpha_t$, defined by
\begin{equation}
\label{eq:reverse-kl-barycenter}
\pi_{\text{target}}^{\alpha_t}
= \argmin_{q \in \Delta(\mathcal{V})}
\left\{
(1-\alpha_t)\KL\!\left(q \middle\| \pi_T^t\right)
+ \alpha_t\KL\!\left(q \middle\| \pi_S^t\right)
\right\},
\end{equation}
where \(\Delta(\mathcal{V})\) denotes the probability simplex over the vocabulary $\mathcal{V}$. The reverse-KL barycenter in \eqref{eq:reverse-kl-barycenter} defines the weighted centroid of a collection of probability distributions, i.e., $\pi_T^t$ and $\pi_S^t$ in this problem, under the reverse KL divergence. Equivalently, this target interpolates between the teacher and student distributions in log-probability space,
\begin{align*}
        \log \pi_{\text{target}}^{\alpha_t}(v \mid x, y^*, \hat{y}_{<t})  =   (1-\alpha_t)\log \pi_{T}^{t}(v, y^*) +  \alpha_{t} \log \pi_{S}^t(v) - \log Z_{\alpha_t}, 
\end{align*}
where $Z_{\alpha_t}$ is the normalization term for \eqref{eq:target-unnorm} defined as:
\begin{equation}
\label{eq:target}
 Z_{\alpha_t} = \sum_v \pi_{\text{target}}^{\alpha_t}(v \mid x, y^*, \hat{y}_{<t}).
\end{equation}

\vspace{3pt} 
\noindent\textbf{Geometric Mixture vs Arithmetic Mixture.} The geometric mixture is chosen over the arithmetic alternative $((1-\alpha_t)\pi_T^t + \alpha_t\pi_S^t)$ for two reasons: 
\textbf{(1)} Because probabilities are multiplied, a token receives substantial target mass only when it is supported by both the privileged teacher and the student. Thus, tokens endorsed primarily by the teacher but assigned very low probabilities by the student are naturally suppressed, whereas an arithmetic mixture would still allocate them non-trivial mass.
\textbf{(2)} When the teacher and student distributions have different modes, an arithmetic mixture can average the modes into a diffuse target with inflated entropy. The geometric mixture avoids this mode-averaging effect, yielding a sharper and more coherent training signal. This is consistent with AMiD's~\citep{shin2025amid} observation that mixture geometry controls mode-covering versus mode-seeking behavior.

\subsection{Loss Function}
\label{subsec:loss-gradient}

The student is trained to minimize the reverse KL divergence objective toward the reverse-KL barycenter target:
\begin{equation}
\label{eq:dupsd-loss}
\mathcal{L}_{\text{DemoPSD}}(\theta) = \EE_{x \sim \mathcal{D}} \EE_{\hat{y} \sim \student(\cdot|x)} \left[ \sum_{t=1}^{|\hat{y}|} \KL\big(\pi_{\theta}(\cdot \mid x, \hat{y}_{<t}) \,\|\, \text{stopgrad}(\pi_{\text{target}}^{\alpha_t}(\cdot \mid x, y^*, \hat{y}_{<t}) )\big) \right].
\end{equation}
 Directly computing and differentiating through the normalization term $Z_{\alpha_t}$ would make the optimization complicated. However, the full target distribution is wrapped with stop-gradient: the teacher $\pi_T^t$, the reference student $\pi_S^t$, the weight $\alpha_t$ are all treated as fixed during the backward pass. Consequently, $Z_{\alpha_t}$  becomes  constant and the optimization hence avoids directly backpropagating through it.
Then the gradient of $\mathcal{L}_{\text{DemoPSD}}$ takes the following form:
\begin{equation}
\label{eq:grad-simple}
\nabla_\theta \mathcal{L}_{\text{DemoPSD}} = \EE_{\hat{y} \sim \student(\cdot|x)} \sum_{t=1}^{|\hat{y}|} \Big[  \EE_{\hat{y}_t \sim {\student}(\cdot \mid x, \hat{y}_{<t}) } {\color{red} (1-\alpha_t)}\ \log \frac{{\student}(\hat{y}_t \mid x, \hat{y}_{<t})}{\pi_\theta(\hat{y}_t \mid x, y^*, \hat{y}_{<t})  }\nabla_\theta \log {\student}(\hat{y}_t \mid x, \hat{y}_{<t}) \Big].
\end{equation}
 The \method{} gradient keeps the same reverse-KL score-function form while scaling the teacher-induced log-ratio signal by the disagreement-based factor $(1-\alpha_t)$. As illustrated in \eqref{eq:grad-simple}, positions with larger teacher-student disagreement contribute a weaker distillation signal, reducing the tendency to backpropagate privileged information dependent guidance from the teacher.

\subsection{Privileged Information Injection and Training Procedure}
\label{subsec:training}

Algorithm~\ref{alg:dupsd} summarizes the full \method{} algorithm.

\vspace{3pt} \noindent\textbf{Privileged Information Injection.}  Generally, for each training prompt $x$ with privileged information $y^*$, and a relevant student-generated rollout $\hat{y}$, we construct the  teacher's input by prepending $y^*$ to the prompt context:
$$[\texttt{Question:}~x~|~\texttt{Privileged Information:}~y^*~|~\texttt{Student Response:}~\hat{y}_{<t}].$$
The student model receives only:
$$[\texttt{Question:}~x~|~\texttt{Student Response:}~\hat{y}_{<t}],$$
both of which share the same model. The only difference is whether the privileged information $y^*$ is included in the conditioning context. 

\vspace{3pt} \noindent\textbf{Reprompting Mechanism.} For a correct rollout, the generated response itself contains rich solution information and can therefore serve as privileged information for the teacher model. Following \citet{hubotter2026sdpo}, we use a reprompting mechanism to construct this privileged context: for each prompt group, if at least one rollout is correct, we randomly select one correct rollout as $y^*$ and insert it into the teacher context above; if no rollout is correct, no reliable privileged teacher context can be formed, so the prompt is skipped for distillation. As explained in \citet{hubotter2026sdpo}, model performance is not sensitive to syntactic variations of the reprompting template, so we use a similar template to instantiate the privileged information for the teacher model.

\begin{algorithm}[t]
\caption{\method{}: \textbf{\underline{D}}isagr\textbf{\underline{e}}ement-\textbf{\underline{m}}\textbf{\underline{o}}dulated \textbf{\underline{P}}olicy \textbf{\underline{S}}elf-\textbf{\underline{D}}istillation}
\label{alg:dupsd}
\begin{algorithmic}[1]
\REQUIRE Dataset $\mathcal{D}$, student model $\student$, 
sensitivity $\beta$, max leakage attenuation coefficient $\alpha_{\max}$
\FOR{each training step}
    \STATE Sample batch $\{x_i\}$ from $\mathcal{D}$; generate rollouts $\hat{y}_i \sim \student(\cdot \mid x_i)$
    \STATE Filter: keep only prompts with at least one correct rollout ($r(\hat{y}_i, a_i^*) = 1$)
    \FOR{each filtered prompt $(x, y^*, \hat{y})$}
        \STATE Obtain the teacher policy: $\pi_T^t \leftarrow \pi_\theta(\cdot \mid x, y^*, \hat{y}_{<t})$ for all $t$
        \STATE Obtain the student policy: $\pi_S^t \leftarrow \pi_\theta(\cdot \mid x, \hat{y}_{<t})$ for all $t$
        \STATE {\color{blue} Compute the distributional disagreement  $d_t$ via \eqref{eq:js-divergence}}
        \STATE {\color{blue} Compute the leakage attenuation coefficient  $\alpha_t$ via \eqref{eq:alpha-remap}}
        \STATE {\color{blue} Compute the reverse-KL barycenter target via \eqref{eq:target-unnorm}}
    \ENDFOR
    \STATE Update $\theta$ via gradient descent on $\mathcal{L}_{\text{DemoPSD}}(\theta)$
\ENDFOR
\end{algorithmic}
\end{algorithm}


\section{Theoretical Analysis}
\label{sec:theory}

This work aims to solve a central question that how we preserve the token-level distributional supervision while suppressing privileged information leakage caused by conditioning the teacher on $y^*$? Standard OPSD exploits dense teacher distributions but is vulnerable to leakage.
In this section, we provide a detailed theoretical analysis of \method{} from the perspectives of leakage attenuation and exploration preservation. 

As we have analyzed in \S\ref{subsec:contrast}, in practice, we maintain a separate EMA copy of the student as the unprivileged reference, and construct the teacher based on the EMA copy for stability. Let $\pi_{\bar{\theta}}$ denote this EMA copy of the current student $\pi_\theta$. Throughout this section, both the privileged teacher distribution $\pi_T$ and the student distribution $\pi_S$ in the target are obtained by conditioning $\pi_{\bar{\theta}}$ on the corresponding privileged or unprivileged contexts. Following \citet{yang2026rlsd}, we define the per-step \emph{leakage rate} as the expected squared magnitude of the privileged deviation:
\begin{equation}
\label{eq:leakage-rate}
\mathcal{R}_{\text{leak}} = \EE_t\!\left[\|\Delta_t\|^2\right], \quad \text{where } \Delta_t(v) = \log \pi_T^t(v, y^*) - \log \pi_S^t(v).
\end{equation}
In this definition, $\Delta_t \in \mathbb{R}^{|\mathcal{V}|}$ is a vector indexed by tokens in the vocabulary $\mathcal{V}$, and $\|\Delta_t\|^2 = \sum_{v \in \mathcal{V}} \Delta_t(v)^2$ is its squared $\ell_2$ norm, measuring the total squared log-probability shift induced by $y^*$ at position $t$. Throughout this section, we write $\pi_\theta^t(v) := \pi_\theta(v \mid x, \hat{y}_{<t})$ for the student distribution at position $t$, and $\pi_{\text{target}}^{\alpha_t} := \pi_{\text{target}}^{\alpha_t}$ for the reverse-KL barycenter target defined in \eqref{eq:target-unnorm} with normalization constant $Z_{\alpha_t}$ in \eqref{eq:target}. Consequently, $\Delta_t$ directly measures the influence of $y^*$ on the model's own prediction, rather than a discrepancy between two independent models.

\begin{theorem}[Leakage Attenuation]
\label{thm:leakage}
The effective leakage rate induced by \method{} satisfies:
\begin{equation}
\label{eq:attenuated-leakage}
\mathcal{R}_{\mathrm{leak}}^{\mathrm{DemoPSD}} := \EE_t\!\left[(1-\alpha_t)^2 \|\Delta_t\|^2\right] < \EE_t\!\left[\|\Delta_t\|^2\right] = \mathcal{R}_{\mathrm{leak}},
\end{equation}
where the strict inequality holds whenever $\Pr(\alpha_t > 0) > 0$. Moreover, the attenuation is strongest where leakage risk is greatest: since $\alpha_t$ is monotonically increasing in $d_t$ and $d_t$ correlates positively with $\|\Delta_t\|$ (both measure the divergence between $\pi_T^t$ and $\pi_S^t$), positions with larger privileged deviation tend to receive larger $\alpha_t$ and hence stronger suppression.
\end{theorem}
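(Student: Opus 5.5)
We first justify why the effective privileged deviation under \method{} is $(1-\alpha_t)\Delta_t$, and then compare expectations. Take logarithms of the target \eqref{eq:target-unnorm}:
\[
\log \pi_{\text{target}}^{\alpha_t}(v) - \log \pi_S^t(v) = (1-\alpha_t)\bigl(\log \pi_T^t(v,y^*) - \log \pi_S^t(v)\bigr) - \log Z_{\alpha_t} = (1-\alpha_t)\Delta_t(v) - \log Z_{\alpha_t}.
\]
The target therefore deviates from the unprivileged reference by $(1-\alpha_t)\Delta_t$, up to a $v$-independent shift $-\log Z_{\alpha_t}$. This shift does not enter the gradient. Since $\sum_v \pi_\theta^t(v)\nabla_\theta\log\pi_\theta^t(v)=0$, any constant added to the log-ratio in the score-function form \eqref{eq:grad-simple} has zero expected contribution. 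This is exactly why the gradient carries the factor $(1-\alpha_t)$ on the teacher-induced log-ratio. Following \citet{yang2026rlsd}, the $y^*$-specific component of the per-sample gradient is linear in the privileged log-probability deviation. Replacing $\Delta_t$ by $(1-\alpha_t)\Delta_t$ (with the normalization constant dropped) therefore scales its squared norm by $(1-\alpha_t)^2$. This identifies $\mathcal{R}_{\mathrm{leak}}^{\mathrm{DemoPSD}}=\EE_t[(1-\alpha_t)^2\|\Delta_t\|^2]$ as the leakage-rate definition \eqref{eq:leakage-rate} applied to the \method{} target.

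Next we establish the inequality pointwise. By \eqref{eq:alpha-remap}, $d_t\ge 0$ gives $\sigma(\beta d_t)\in[1/2,1)$, so $\alpha_t\in[0,\alpha_{\max})$. Assuming $\alpha_{\max}\le 1$, we get $0\le 1-\alpha_t\le 1$ and hence $(1-\alpha_t)^2\le 1$. Multiplying by $\|\Delta_t\|^2\ge 0$ and taking expectations gives the weak inequality. For strictness, write
\[
\mathcal{R}_{\mathrm{leak}}-\mathcal{R}_{\mathrm{leak}}^{\mathrm{DemoPSD}}=\EE_t\bigl[\alpha_t(2-\alpha_t)\|\Delta_t\|^2\bigr].
\]
The integrand is nonnegative, so it suffices that it is positive with positive probability. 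If $\alpha_t>0$, then $d_t>0$ (since $\beta>0$), so $\pi_T^t\ne\pi_S^t$. Two distinct distributions cannot have identical log-probabilities, so $\Delta_t\ne 0$. Moreover $2-\alpha_t>0$. Hence on the event $\{\alpha_t>0\}$, which has positive probability, the integrand is strictly positive, and strictness follows.

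The qualitative \emph{Moreover} clause is heuristic. Our plan is to formalize it only in the special case where both distributions have full support. There, $\mathrm{JSD}(\pi_S^t\|\pi_T^t)\to 0$ if and only if $\Delta_t\to 0$. In the small-divergence regime, both $d_t$ and $\|\Delta_t\|^2$ are comparable to a $\chi^2$-type quadratic form, since a second-order expansion gives $\mathrm{JSD}\approx\tfrac18\sum_v\pi_S^t(v)\Delta_t(v)^2$ minus the squared mean term. The composition with the monotone $f$ then yields the claimed co-monotonic tendency. We state this as a tendency rather than an exact ordering, because JSD weights coordinates by probability while $\|\cdot\|^2$ is unweighted.

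The main obstacle is the first step: arguing rigorously that the leakage rate of \method{} equals $\EE_t[(1-\alpha_t)^2\|\Delta_t\|^2]$ rather than just defining it so. This requires handling the normalization constant $Z_{\alpha_t}$ and the $v$-dependence of the score-function weights. We intend to resolve it through the zero-mean score identity above, and to treat the leakage rate, as in \citet{yang2026rlsd}, as a quantity defined up to these constant shifts.
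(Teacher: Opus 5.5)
Your proposal is correct and takes essentially the paper's route. You attach the factor $(1-\alpha_t)$ to the privileged deviation, bound $(1-\alpha_t)^2\|\Delta_t\|^2\le\|\Delta_t\|^2$ pointwise, and get strictness on the positive-probability event $\{\alpha_t>0\}=\{d_t>0\}$, with the ``Moreover'' clause heuristic in both. You are slightly more careful than the paper: you derive the $(1-\alpha_t)$ factor from the log-linear target via the zero-mean score identity (the paper only cites \citet{yang2026rlsd}), and you state explicitly that $\Delta_t\neq 0$ on that event and that $\alpha_{\max}\le 1$, both of which the paper's strictness step uses silently.
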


The key message is that \method{} does not merely reduce the average leakage, it \emph{selectively} attenuates positions that contribute most to leakage. The factor $(1-\alpha_t)$  scales down the privileged deviation $\Delta_t$ in the gradient, and since $\alpha_t$ correlates positively with $\|\Delta_t\|$, the suppression is strongest precisely where it is most needed. Full proof is in Appendix~\ref{app:proof-leakage}.


\begin{theorem}[Exploration Preservation]
\label{thm:bestofboth}
Let $\pi_{\text{target}}^{\alpha_t}(v) \propto \big(\pi_T^t(v, y^*)\big)^{1-\alpha_t}\big(\pi_S^t(v)\big)^{\alpha_t}$ be the reverse-KL barycenter target with the leakage attenuation coefficient $\alpha_t \in [0,\alpha_{\max}]$, and write $\Delta_t(v) = \log \pi_T^t(v, y^*) - \log \pi_S^t(v)$ for the log-ratio. The full-teacher target minimized by SDPO is the special case $\alpha_t = 0$, namely $\pi_T^t$. Assume the privileged signal is positively aligned with the model's own unprivileged prediction, i.e.,
\begin{equation}
\label{eq:cov-condition}
\mathrm{Cov}_{q_{\gamma}^{t}}(\Delta_t,\, \log \pi_S^t) \;\geq\; 0
\end{equation}
under every geometric interpolation $q_{\gamma}^{t} \propto (\pi_T^t)^{\gamma}(\pi_S^t)^{1-\gamma}$, $\gamma \in [0,1]$, between the student and teacher's distributions. Then the \method{} target preserves strictly more entropy than the full-teacher target, with the ordering
\begin{equation}
\label{eq:entropy-ordering}
\Ent(\pi_S^t) \;\geq\; \Ent(\pi_{\text{target}}^{\alpha_t}) \;\geq\; \Ent(\pi_T^t),
\end{equation}
holding with strict inequalities whenever $0 < \alpha_t$ and $\pi_T^t \neq \pi_S^t$. Moreover, the entropy gain $\Ent(\pi_{\text{target}}^{\alpha_t}) - \Ent(\pi_T^t) \ge 0$ over the full-teacher target is non-decreasing in $\alpha_t$: the more the teacher's prediction depends on the privileged $y^*$ (the larger $\alpha_t$), the more exploration capacity \method{} retains relative to SDPO.
\end{theorem}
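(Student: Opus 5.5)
The plan is to treat everything as a one-parameter family along the geometric path and differentiate the entropy along it. Fix $t$ and drop the superscript. Assume $\pi_T,\pi_S$ have full support on $\mathcal{V}$, so that $\Delta$ is finite. For $\gamma\in[0,1]$ set
\[
q_\gamma(v)=\frac{\pi_S(v)\,e^{\gamma\Delta(v)}}{Z(\gamma)},\qquad Z(\gamma)=\sum_v \pi_S(v)e^{\gamma\Delta(v)} .
\]
This is exactly $q_\gamma\propto \pi_T^{\gamma}\pi_S^{1-\gamma}$. It gives $q_0=\pi_S$, $q_1=\pi_T$ and $q_{1-\alpha_t}=\pi_{\text{target}}^{\alpha_t}$. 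So \eqref{eq:entropy-ordering} and the monotonicity claim both reduce to showing that $h(\gamma):=\Ent(q_\gamma)$ is non-increasing on $[0,1]$, and strictly decreasing when $\pi_T\neq\pi_S$. The map $\alpha_t\mapsto \gamma=1-\alpha_t$ is decreasing, so "non-increasing in $\gamma$" translates directly into "non-decreasing in $\alpha_t$" for the entropy gain $\Ent(\pi_{\text{target}}^{\alpha_t})-\Ent(\pi_T)$.

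The key computation is the derivative of $h$. It uses two exponential-family identities for this tilted family: $(\log Z)'(\gamma)=\EE_{q_\gamma}[\Delta]$, and $\frac{d}{d\gamma}\EE_{q_\gamma}[f]=\mathrm{Cov}_{q_\gamma}(f,\Delta)$ for any fixed $f$. Writing
\[
h(\gamma)=-\EE_{q_\gamma}[\log\pi_S]-\gamma\,\EE_{q_\gamma}[\Delta]+\log Z(\gamma)
\]
and differentiating, the $\EE_{q_\gamma}[\Delta]$ terms cancel, leaving
\[
h'(\gamma)=-\mathrm{Cov}_{q_\gamma}(\Delta,\log\pi_S)-\gamma\,\mathrm{Var}_{q_\gamma}(\Delta).
\]
The first term is $\le 0$ by the hypothesis \eqref{eq:cov-condition}, which is assumed for every $\gamma\in[0,1]$. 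The second term is $\le 0$ because $\gamma\ge 0$. Hence $h'\le 0$ on $[0,1]$, which gives both inequalities in \eqref{eq:entropy-ordering}, since $0\le 1-\alpha_t\le 1$. It also gives monotonicity of $h(1-\alpha_t)-h(1)$ in $\alpha_t$.

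For strictness, suppose $\pi_T\neq\pi_S$. Then $\Delta$ is not constant on $\mathcal{V}$: a constant log-ratio between two normalized distributions forces the constant to be $0$, i.e.\ $\pi_T=\pi_S$. Since $q_\gamma$ has full support, $\mathrm{Var}_{q_\gamma}(\Delta)>0$, so $h'(\gamma)<0$ for every $\gamma\in(0,1]$. Integrating over $[1-\alpha_t,1]$ gives $\Ent(\pi_{\text{target}}^{\alpha_t})>\Ent(\pi_T)$ whenever $\alpha_t>0$. Integrating over $[0,1-\alpha_t]$ gives $\Ent(\pi_S)>\Ent(\pi_{\text{target}}^{\alpha_t})$, provided $1-\alpha_t>0$; this holds because $\alpha_t\le\alpha_{\max}<1$, and $h'<0$ on the interval except possibly at its left endpoint.

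The main obstacle I expect is bookkeeping rather than depth: getting the sign conventions right in $h'$, and justifying the full-support assumption needed for $\Delta$ to be finite and the variance to be positive. I would state the full-support assumption explicitly; it holds for softmax outputs. I would also note that the covariance hypothesis is only used to control the cross term, while the variance term alone supplies the strictness.
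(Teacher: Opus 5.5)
Your proposal is correct and follows the paper's proof. Like the paper, you parametrize the geometric path as the exponential family $q_\gamma \propto \pi_S^t e^{\gamma\Delta_t}$, derive $\frac{d}{d\gamma}\Ent(q_\gamma) = -\mathrm{Cov}_{q_\gamma}(\Delta_t,\log\pi_S^t) - \gamma\,\mathrm{Var}_{q_\gamma}[\Delta_t]$, and read off monotonicity on $[0,1]$. Where you go further than the paper is in stating the full-support and $\alpha_{\max}<1$ assumptions, handling the possibly vanishing derivative at $\gamma=0$, and deriving the monotonicity of the entropy gain in $\alpha_t$ through $\gamma = 1-\alpha_t$, all of which the paper leaves implicit.
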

\begin{proof}[Proof sketch]
We write  the family $q_{\gamma}^{t}(v)$ as  $q_{\gamma}^{t}(v) \propto \pi_S^t(v)\,e^{\gamma\Delta_t(v)}$, which is an exponential family with  parameter $\gamma$ and sufficient statistic $\Delta_t$. Differentiating $\Ent(q_{\gamma}^{t}) = -\EE_{q_{\gamma}^{t}}[\log q_{\gamma}^{t}]$ and using $\frac{d}{d\gamma}\EE_{q_{\gamma}^{t}}[f] = \mathrm{Cov}_{q_{\gamma}^{t}}(f, \Delta_t)$ yields $\frac{d\Ent}{d\gamma} = -\mathrm{Cov}_{q_{\gamma}^{t}}(\Delta_t, \log q_{\gamma}^{t})$. Substituting $\log q_{\gamma}^{t} = \log \pi_S^t + \gamma\Delta_t - \log Z_\gamma$ and expanding gives \eqref{eq:entropy-derivative}. Under condition \eqref{eq:cov-condition}, both terms are non-positive for $\gamma > 0$, so $\Ent(q_{\gamma}^{t})$ is decreasing. Since $1-\alpha_t < 1$, the \method{} target $q_{1-\alpha_t}$ has strictly higher entropy than the OPSD target $q_1$. Full proof is in Appendix~\ref{app:proof-bestofboth}.
\end{proof}

The result follows by tracking the entropy along the geometric path $q_{\gamma}^{t} \propto (\pi_T^t)^{\gamma}(\pi_S^t)^{1-\gamma}$ that connects the unprivileged student distribution ($q_0 = \pi_S^t$) to the full teacher distribution ($q_1 = \pi_T^t$). The learning target of \method{} sits at $q_{1-\alpha_t} = \pi_{\text{target}}^{\alpha_t}$, strictly short of the teacher. Along this path, the entropy satisfies
\begin{equation}
\label{eq:entropy-derivative}
\frac{d\,\Ent(q_{\gamma}^{t})}{d\gamma} \;=\; -\,\gamma\,\mathrm{Var}_{q_{\gamma}^{t}}[\Delta_t] \;-\; \mathrm{Cov}_{q_{\gamma}^{t}}(\Delta_t,\, \log \pi_S^t),
\end{equation}
which separates the entropy change into two terms.
The first term $-\gamma\,\mathrm{Var}_{q_{\gamma}^{t}}[\Delta_t]$ is the intrinsic entropy cost of incorporating the privileged signal: any nonconstant multiplicative shift reduces entropy, and this cost grows with the  $\gamma$.
The second term $-\mathrm{Cov}_{q_{\gamma}^{t}}(\Delta_t, \log \pi_S^t)$ captures the interaction with the model's existing predictions, and condition \eqref{eq:cov-condition} requires their positive correlation: tokens to which the model already assigns high probability receive a larger boost from~$y^*$. This is the natural regime for self-distillation, where the teacher is the \emph{same model} with additional answer information and hence predominantly sharpens existing predictions rather than contradicting them.
Because \method{} halts the interpolation at $\gamma = 1-\alpha_t < 1$ rather than at the full teacher $\gamma = 1$, it never pays the final, steepest portion of this entropy cost; the entropy it saves grows with $\alpha_t$, consistent with the 33--98\% entropy improvements over SDPO observed in Table~\ref{tab:training-dynamics}.

These two results characterize why the reverse-KL barycenter target is suitable for privileged self-distillation. Theorem~\ref{thm:leakage} shows that disagreement-dependent weighting attenuates the contribution of privileged deviations, thereby reducing the pressure to imitate privileged information dependent teacher signals. Theorem~\ref{thm:bestofboth} complements this result by showing that, under the stated covariance condition, the barycenter target remains strictly more entropic than the full privileged-teacher target, preserving exploration where the teacher distribution is strongly shaped by~$y^*$. Combined with the gradient analysis in \S\ref{subsec:loss-gradient}, these results indicate that \method{} retains dense token-level supervision when the teacher and student agree, while down-weighting teacher signals that are likely to reflect privileged information on high-disagreement positions.

\section{Experiments}
\label{sec:experiments}

\begin{table}[t]
\centering
\caption{Main results on SciKnowEval. 
\textbf{Bold} indicates the best method per metric. \method{} consistently outperforms both GRPO and SDPO across all four domains and all metrics.}
\label{tab:main-results}
\setlength{\tabcolsep}{6pt}
\renewcommand{\arraystretch}{1.1}
\begin{tabular}{l|ccc|ccc|ccc}
\toprule
& \multicolumn{3}{c|}{\textbf{mean@16}} & \multicolumn{3}{c|}{\textbf{maj@16}} & \multicolumn{3}{c}{\textbf{best@16}} \\
\cmidrule(lr){2-4}\cmidrule(lr){5-7}\cmidrule(lr){8-10}
\textbf{Domain} & GRPO & SDPO & \method{} & GRPO & SDPO & \method{} & GRPO & SDPO & \method{} \\
\midrule
Biology   & 33.51 & 36.88 & \textbf{39.25} & 34.84 & 38.07 & \textbf{40.64} & 58.36 & 64.04 & \textbf{68.51} \\
Chemistry & 65.83 & 71.70 & \textbf{72.98} & 66.72 & 72.41 & \textbf{73.71} & 80.47 & 85.94 & \textbf{90.05} \\
Material  & 76.32 & 76.13 & \textbf{76.53} & 76.50 & 76.24 & \textbf{76.71} & 80.24 & 81.69 & \textbf{81.79} \\
Physics   & 66.31 & 68.98 & \textbf{71.64} & 70.52 & 71.88 & \textbf{74.24} & 82.59 & 85.51 & \textbf{88.13} \\
\midrule
Average & 60.49 & 63.42 & \textbf{65.10} & 62.14 & 64.65 & \textbf{66.33} & 75.42 & 79.30 & \textbf{82.12} \\
\bottomrule
\end{tabular}%
\vspace{0.2cm}
\end{table}

\begin{table}[t]
\centering
\caption{Out-of-distribution generalization on GPQA Extended. Material science has no GPQA counterpart. Values are taken at the final training stage (mean over the last three evaluations). \method{} remains stable and improves slightly across all three GPQA domains, whereas SDPO degrades substantially over training (Figure~\ref{fig:gpqa-ood}).}
\label{tab:gpqa}
\setlength{\tabcolsep}{8pt}
\begin{tabular}{l|ccc|c}
\toprule
\textbf{Method} & \textbf{Biology} & \textbf{Chemistry} & \textbf{Physics} & \textbf{Average} \\
\midrule
SDPO & 57.81 & 28.62 & 52.99 & 46.47 \\
\method{} & \textbf{61.42} & \textbf{41.75} & \textbf{59.98} & \textbf{54.38} \\
\bottomrule
\end{tabular}
\vspace{0.2cm}
\end{table}

We evaluate \method{} on scientific reasoning benchmarks, comparing against SDPO and GRPO as the primary baselines. The experiments focus on three aspects: in-domain accuracy, training entropy as an empirical indicator of exploration preservation, and out-of-distribution generalization as a proxy for reduced privileged information leakage.

\subsection{Experimental Setup}
\label{subsec:setup}

\vspace{3pt} \noindent\textbf{Base Model.} We use Qwen3-4B-Instruct~\citep{qwen3} as the base model for all experiments. 

\vspace{3pt} \noindent\textbf{Training Data.} We train on SciKnowEval~\citep{sciknoweval}, a multi-domain scientific reasoning benchmark formulated as 4-choice multiple-choice questions. We train and evaluate separately on four domains: biology, chemistry, material science, and physics. 

\vspace{3pt} \noindent\textbf{Evaluation Benchmarks.} We evaluate the performance on the following benchmarks to assess both in-domain accuracy and out-of-domain generalization:
\begin{itemize}[itemsep=1pt,topsep=1pt]
    \item \textbf{SciKnowEval} (in-domain): Domain-matched test sets for each of the four scientific domains.
    \item \textbf{GPQA Extended}~\citep{rein2023gpqa} (out-of-domain): Graduate-level science questions in biology, chemistry, and physics. It is used to assess generalization beyond the training distribution.
\end{itemize}

\vspace{3pt} \noindent\textbf{Evaluation Metrics.} For each test prompt, we sample 16 rollouts and report three complementary metrics that capture different aspects of model quality: 
\begin{itemize}[itemsep=1pt,topsep=1pt]
    \item \textbf{mean@16}: Average accuracy across 16 sampled rollouts. 
    \item \textbf{maj@16}: Accuracy of the majority-voted answer across 16 rollouts. 
    \item \textbf{best@16}: Best accuracy among 16 rollouts. 
\end{itemize}

\vspace{3pt} \noindent\textbf{Baselines.} We compare \method{} against two baselines:
\begin{itemize}[itemsep=1pt,topsep=1pt]
    \item \textbf{GRPO}~\citep{shao2024deepseekmath}: The standard RLVR baseline that estimates group-relative advantages from binary outcome rewards. 
    \item \textbf{SDPO}~\citep{hubotter2026sdpo}: The  on-policy self-distillation baseline. 
\end{itemize}
All three methods use the same codebase, training infrastructure, base model, and training data, differing only in their optimization objectives: GRPO uses outcome-level reward, SDPO uses dense teacher supervision, and \method{} uses disagreement-modulated reverse-KL barycenter targets.

\begin{figure}[t!]
    \centering
    \begin{subfigure}[t]{0.48\textwidth}
        \centering
        \includegraphics[width=\textwidth]{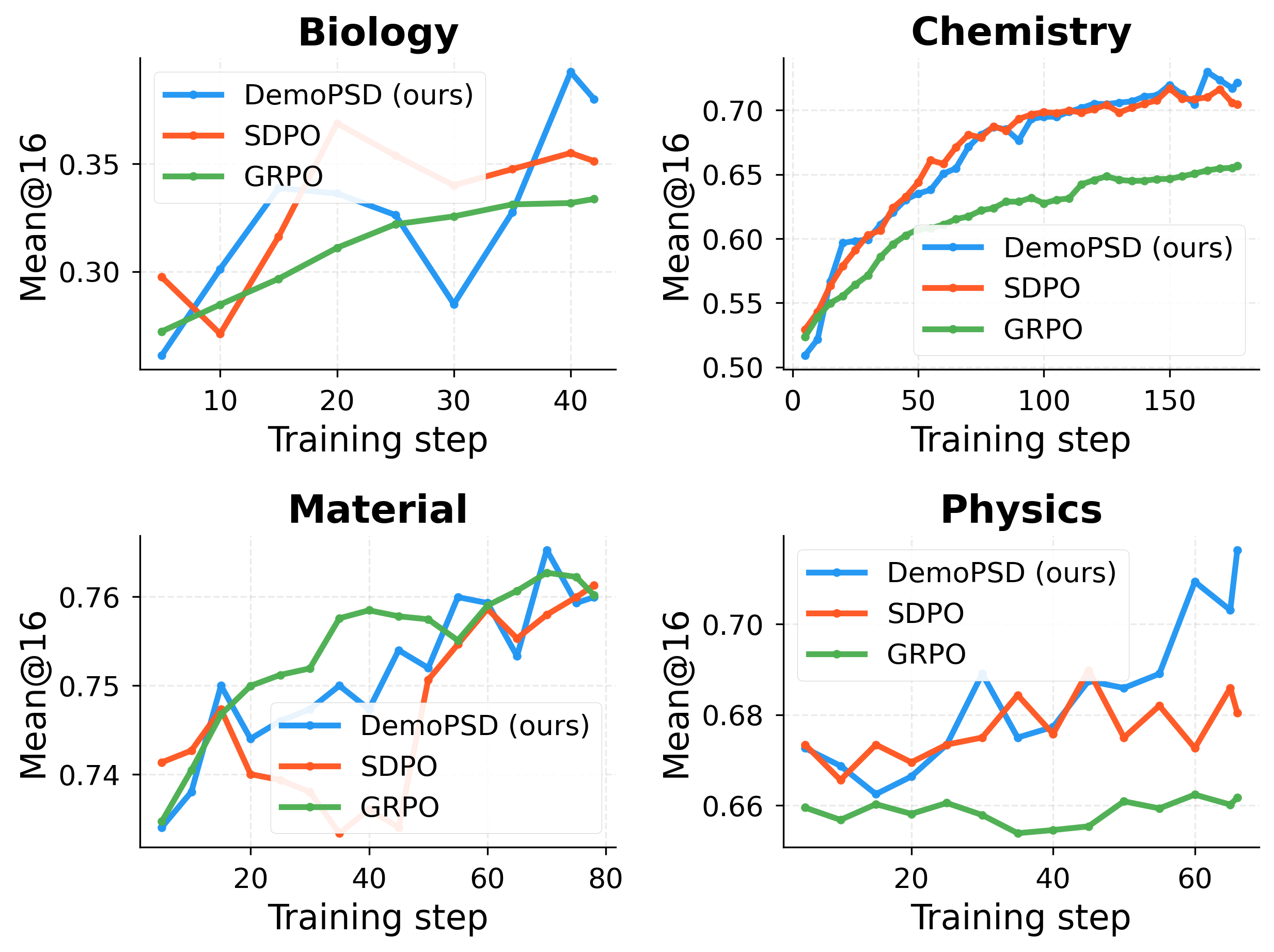}
        \caption{{\small Mean@16 on SciKnowEval over training steps. \method{} maintains higher accuracy than SDPO across training, with the largest margins observed in biology and physics.}}
        \label{fig:accuracy-curves}
    \end{subfigure}
    \hfill
    \begin{subfigure}[t]{0.48\textwidth}
        \centering
        \includegraphics[width=\textwidth]{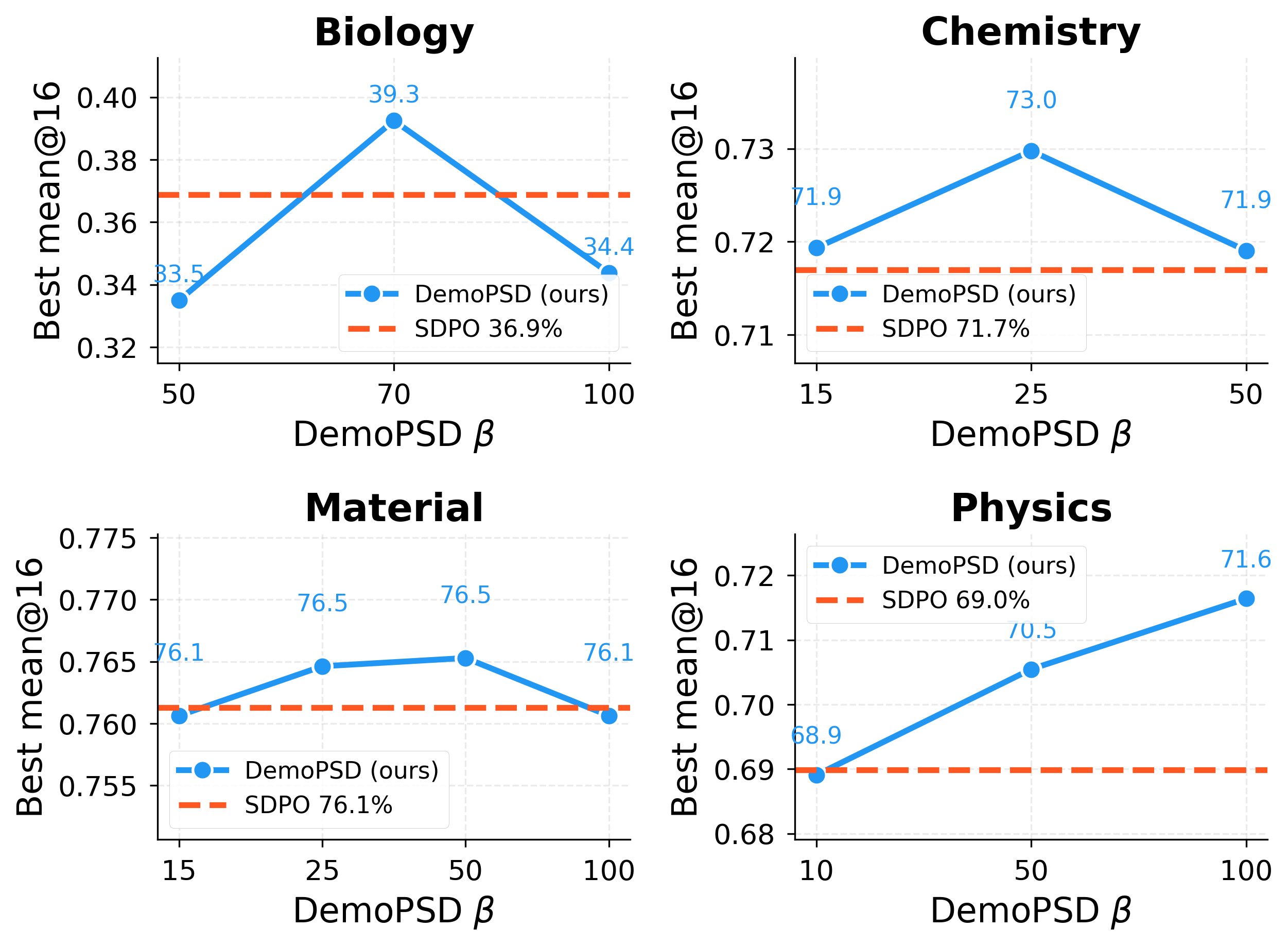}
        \caption{{\small The sensitivity of parameter $\beta$ per domain. The dashed line is the SDPO baseline. \method{} remains competitive with or above the SDPO baseline across $\beta \in [25, 100]$, while the optimal choice of $\beta$ varies by domain.}}
        \label{fig:beta-sensitivity}
    \end{subfigure}
    \vspace{-0.1cm}
    \caption{(a) Test accuracy curves across four domains of SciKnowEval. (b) Sensitivity to $\beta$.}
    \label{fig:acc-and-sensitivity}
\end{figure}

\vspace{3pt} \noindent\textbf{Hyperparameters.} All methods share the following settings: learning rate $1 \times 10^{-6}$, batch size 64, 8 rollouts per prompt for training, max prompt length 2048, max response length 16384, 10 warmup steps, 3 training epochs. For distillation-based methods (SDPO and \method{}), we additionally use top-$k = 100$ for distillation, EMA rate $\eta = 0.05$, and training temperature $= 1.0$  with test temperature $= 0.7$. \method{}-specific parameters: $\alpha_{\max} = 0.15$. The sensitivity parameter $\beta$ is tuned per domain (see \S\ref{subsec:sensitivity}). GRPO uses a KL penalty coefficient $\beta_{\text{KL}} = 0.04$ and clips importance sampling ratios at 2.0. Training uses 8 NVIDIA H20 GPUs with FSDP, vLLM for rollout generation, and flash attention.

\subsection{Main Results}
\label{subsec:main-results}

Table~\ref{tab:main-results} reports the accuracy results across all four scientific domains.
On average, \method{} improves over SDPO by 1.68 on mean@16, 1.68 on maj@16, and 2.82 on best@16. The best@16 improvement is notably larger, indicating that \method{}'s preserved exploration entropy surfaces higher-quality reasoning paths during sampling. Compared to GRPO, the total gain from \method{} is 5.21 on mean@16, demonstrating that the combination of dense supervision and selective adoption leads to substantial improvement. 

Figure~\ref{fig:accuracy-curves} shows how the test accuracy mean@16 changes with training steps. \method{} matches or outperforms SDPO throughout training, and the difference grows in later epochs. This agrees with our theoretical prediction that reducing leakage becomes more helpful as the student moves closer to the teacher's distribution. Figure~\ref{fig:best16-curves} reports the corresponding best@16 accuracy. The improvement is especially clear under best@16 and grows across training, indicating that the higher-entropy policy maintains broader solution coverage.

\subsection{Out-of-Distribution Generalization}
\label{subsec:ood}

We evaluate the model's out-of-distribution generalization capability on GPQA Extended dataset, which contains graduate-level science questions that differ substantially from SciKnowEval in format, difficulty, and question style. Table~\ref{tab:gpqa} reports the accuracy at convergence, and Figure~\ref{fig:gpqa-ood} traces the full GPQA learning curves. 


\begin{figure}[t]
    \centering
    \includegraphics[width=\textwidth]{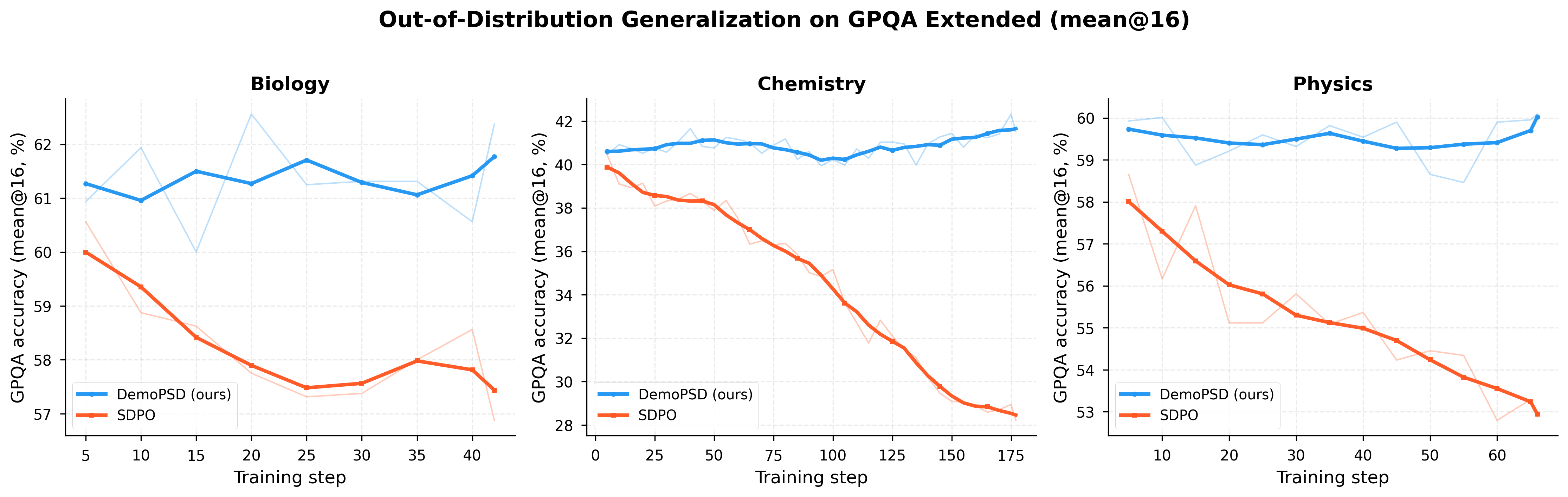}
    \caption{Out-of-distribution generalization on GPQA Extended. Each panel tracks GPQA accuracy over training for one domain (material science has no GPQA counterpart). SDPO reaches its best OOD accuracy early and then degrades as training progresses, consistent with accumulating in-domain overfitting and privileged information leakage. In contrast, \method{} maintains stable OOD performance and achieves an improvement over training.  
    }
    \label{fig:gpqa-ood}
\end{figure}

Although SDPO and \method{} start from comparable OOD accuracy, their performance evolves in substantially different directions over training (Figure~\ref{fig:gpqa-ood}). SDPO reaches its best OOD performance early, but subsequently degrades across all three GPQA domains; the largest drop occurs in chemistry, where accuracy decreases from $40.45$ to $28.62$. This mirrors the in-domain leakage-degradation pattern (\S\ref{subsec:leakage-problem}): by collapsing onto the teacher, SDPO overfits the training distribution and loses the exploratory capacity necessary to transfer to novel questions. In contrast, \method{} maintains stable OOD accuracy throughout training and achieves a measurable improvement, ending $7.91$ above SDPO on average. 

\subsection{Training Dynamics}
\label{subsec:dynamics}

To understand how \method{} achieves its accuracy gains, we examine training dynamics, including entropy, disagreement, and hedging behavior at the final training step (Table~\ref{tab:training-dynamics}).

\begin{table}[t]
\centering
\caption{Training dynamics at the final step. \method{} maintains substantially higher entropy than SDPO across all domains.}
\label{tab:training-dynamics}
\setlength{\tabcolsep}{5pt}
\renewcommand{\arraystretch}{1.05}
\begin{tabular}{l|l|cc|ccc}
\toprule
\textbf{Domain} & \textbf{Method} & \textbf{Entropy} & \textbf{$\Delta$Ent.} & \textbf{mean $\bar{\alpha_t}$} & \textbf{mean $\bar{d_t}$} & \textbf{Active \%} \\
\midrule
\multirow{2}{*}{Biology} & SDPO & 0.602 & -- & -- & -- & -- \\
 & \method{} & \textbf{0.816} & +35.5\% & 0.055 & 0.046 & 64.8 \\
\midrule
\multirow{2}{*}{Chemistry} & SDPO & 0.322 & -- & -- & -- & -- \\
 & \method{} & \textbf{0.555} & +72.4\% & 0.036 & 0.037 & 84.0 \\
\midrule
\multirow{2}{*}{Material} & SDPO & 0.150 & -- & -- & -- & -- \\
 & \method{} & \textbf{0.297} & +98.0\% & 0.033 & 0.031 & 68.8 \\
\midrule
\multirow{2}{*}{Physics} & SDPO & 0.385 & -- & -- & -- & -- \\
 & \method{} & \textbf{0.511} & +32.7\% & 0.040 & 0.026 & 90.6 \\
\bottomrule
\end{tabular}
\vspace{0.2cm}
\end{table}

\vspace{3pt} \noindent\textbf{Entropy Preservation.} \method{} maintains 33--98\% higher final entropy than SDPO across all domains (Figure~\ref{fig:entropy-curves}). The largest entropy gap appears in material science (+98.0\%), where SDPO's entropy drops to 0.150, close to entropy collapse.

\vspace{3pt} \noindent\textbf{Disagreement Sparsity.} The average leakage attenuation coefficient $\bar{\alpha_t}$ stays consistently low (0.033--0.055), while the mean disagreement $\bar{d_t}$ ranges from 0.026 to 0.046. These values indicate that the target remains close to the teacher distribution for most tokens, with strong attenuation applied only to a small subset of positions exhibiting substantial teacher-student disagreement. This pattern is consistent with the selective adoption principle: \method{} preserves the teacher signal on most tokens and applies disagreement-modulated attenuation only at positions where teacher and student's predictions diverge.

\vspace{3pt} \noindent\textbf{Active Sample Fraction.} The ``Active \%'' column in Table~\ref{tab:training-dynamics} denotes the fraction of training samples for which a valid privileged teacher context is available, namely samples whose prompt group contains at least one correct rollout. The active fraction correlates with domain difficulty.

\begin{table}[t]
\centering
\caption{Sensitivity to $\beta$ (mean@16). All configurations use  $\alpha_{\max}=0.15$.}
\label{tab:beta-sensitivity}
\setlength{\tabcolsep}{7pt}
\begin{tabular}{l|cccc}
\toprule
$\boldsymbol{\beta}$ & \textbf{Biology} & \textbf{Chemistry} & \textbf{Material} & \textbf{Physics} \\
\midrule
15 & -- & 71.93 & -- & -- \\
25 & -- & \textbf{72.98} & 76.46 & -- \\
50 & -- & 71.90 & \textbf{76.53} & 70.55 \\
70 & \textbf{39.25} & -- & -- & -- \\
100 & 36.88 & -- & 76.06 & \textbf{71.64} \\
\midrule
SDPO & 36.88 & 71.70 & 76.13 & 68.98 \\
\bottomrule
\end{tabular}

\end{table}

\subsection{Hyperparameter Sensitivity}
\label{subsec:sensitivity}


The key hyperparameter of \method{} is $\beta$, which controls how sharply the leakage attenuation coefficient $\alpha_t$ responds to disagreement. Table~\ref{tab:beta-sensitivity} shows the three best-performing $\beta$ configurations for each domain.



A general pattern emerges that domains where the disagreement is smaller (e.g., physics with mean $\bar{d_t} = 0.026$) benefit from a higher $\beta$ to amplify the weak disagreement signal, while domains with greater disagreement (e.g., biology with mean $\bar{d_t} = 0.046$) benefit from a lower $\beta$ to avoid over-aggressive hedging. Across the range $\beta \in [25, 100]$, \method{} consistently matches or outperforms SDPO, demonstrating moderate robustness.

\vspace{3pt} \noindent\textbf{Remap vs.\ Threshold Mode.} All top-performing configurations adopt the remapped alpha schedule in \eqref{eq:alpha-remap}, which constrains $\alpha_t$ to $[0, \alpha_{\max}]$ and guarantees that the privileged teacher retains at least $(1-\alpha_{\max})$ of the mixture weight. Figure~\ref{fig:beta-sensitivity} further illustrates how accuracy varies with $\beta$ across domains.

\subsection{Disagreement Analysis}
\label{subsec:analysis}

To characterize how disagreement is distributed across tokens, we summarize statistics of the per-token disagreement $d_t$ and leakage attenuation coefficient $\alpha_t$ for the best-performing \method{} run in each domain (Table~\ref{tab:disagreement-stats}).

\begin{table}[t]
\centering
\caption{Statistics of disagreement-based attenuation at the final training step. For most tokens, the target remains close to the teacher distribution, while only a small subset of high-disagreement tokens is interpolated more strongly toward the student distribution.}
\label{tab:disagreement-stats}
\setlength{\tabcolsep}{7pt}
\begin{tabular}{l|cccc}
\toprule
\textbf{Statistic} & \textbf{Biology} & \textbf{Chemistry} & \textbf{Material} & \textbf{Physics} \\
\midrule
Mean leakage
attenuation coefficient $\overline{\alpha}_t$ & 0.055 & 0.036 & 0.033 & 0.040 \\
Mean disagreement $\overline{d}_t$ & 0.046 & 0.037 & 0.031 & 0.026 \\
Active sample fraction & 64.8\% & 84.0\% & 68.8\% & 90.6\% \\
\bottomrule
\end{tabular}
\vspace{-0.2cm}
\end{table}
\begin{figure}[t]
    \centering
    \begin{subfigure}[t]{0.48\textwidth}
        \centering
        \includegraphics[width=\textwidth]{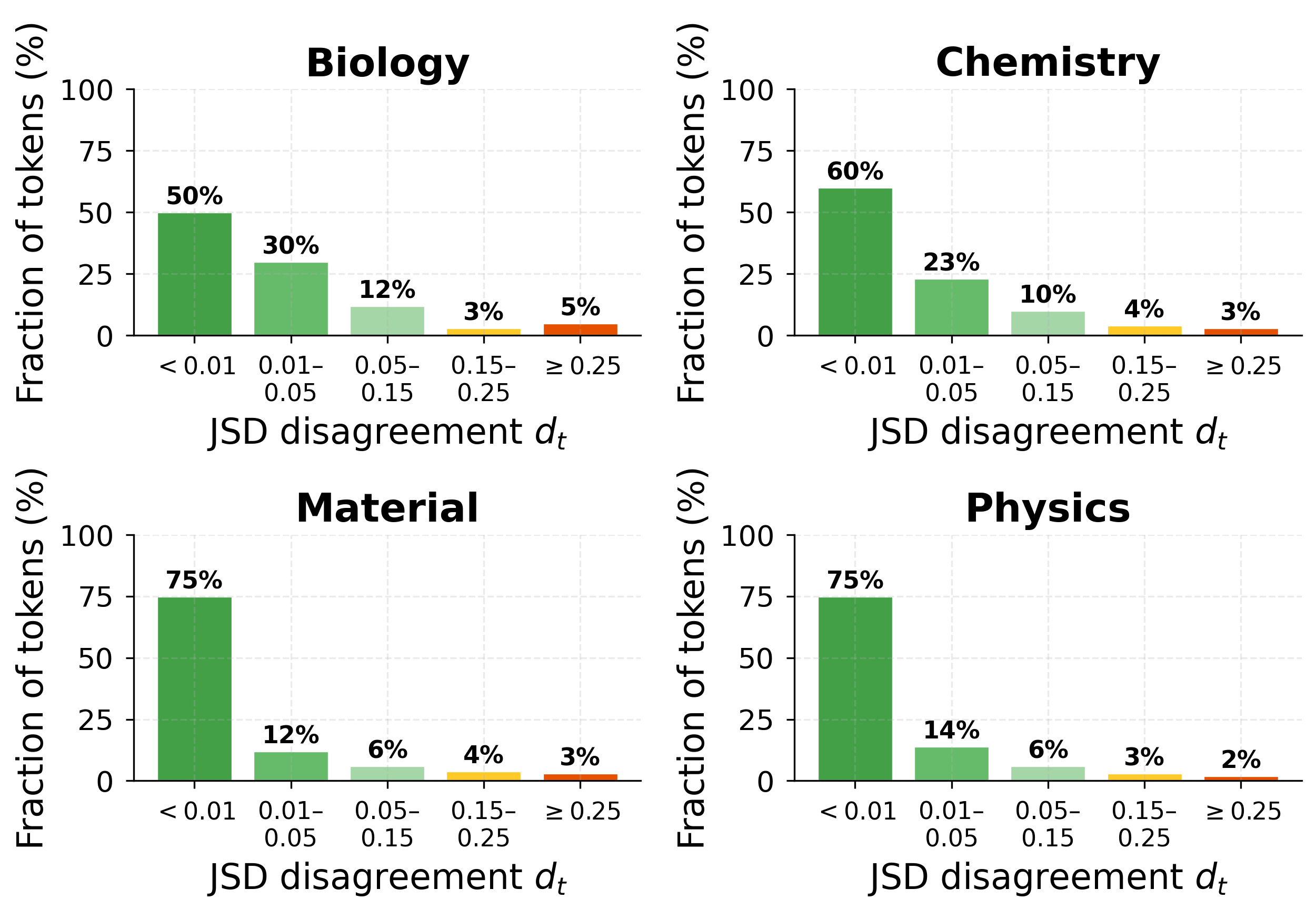}
        \caption{{\small Distribution of per-token JSD disagreement $d_t$. Each panel shows one domain at the final training step. The distribution is heavily right-skewed: the vast majority of tokens have near-zero disagreement, and only 2\%-5\% exceed $ 0.25$.}}
        \label{fig:disagreement-hist}
    \end{subfigure}
    \hfill
    \begin{subfigure}[t]{0.47\textwidth}
        \centering
        \includegraphics[width=\textwidth]{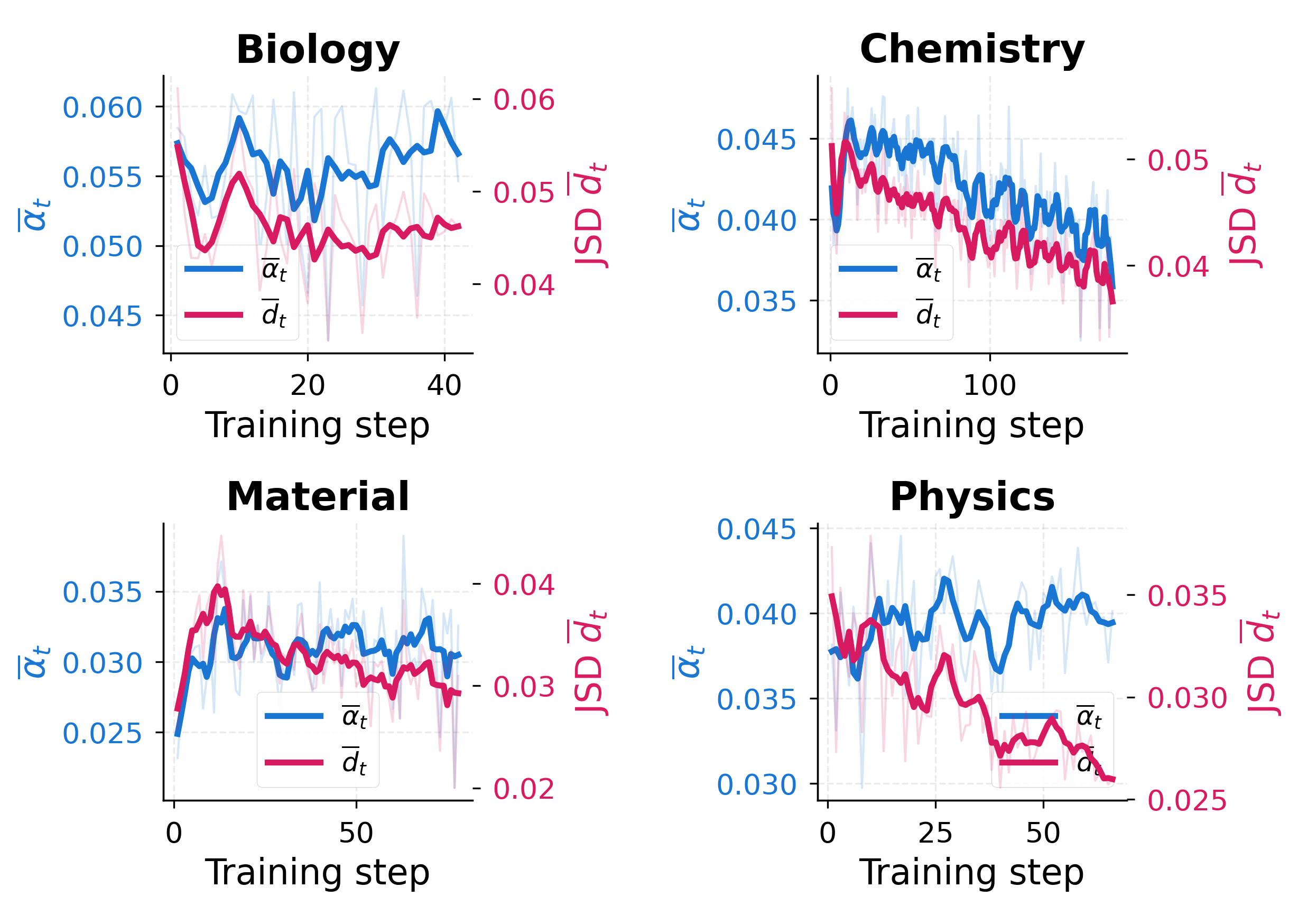}
        \caption{{\small \method{} dynamics over training. Mean leakage attenuation coefficient $\overline{\alpha}_t$ (blue, left axis) and mean JSD disagreement $\overline{d}_t$ (pink, right axis) over training batch per domain. Both quantities remain small and relatively stable.}}
        \label{fig:dupsd-dynamics}
    \end{subfigure}
    \caption{Disagreement analysis of \method{} across four scientific domains.}
    \label{fig:disagreement-analysis}
    \vspace{-0.5cm}
\end{figure}

Across all domains, the disagreement distribution is strongly right-skewed: most tokens exhibit negligible disagreement, allowing the student to remain closely aligned with the teacher, while only a small subset of tokens (typically around 5) shows substantial divergence and triggers stronger attenuation of the privileged teacher signal. This sparsity is beneficial because \method{} preserves the dense token-level supervision of OPSD for the vast majority of positions, while selectively attenuating potential leakage only where the teacher-student mismatch is pronounced.
Figure~\ref{fig:disagreement-hist} visualizes this sparsity directly. Figure~\ref{fig:dupsd-dynamics} shows the evolution of the mean attenuation coefficient $\overline{\alpha}_t$ and  disagreement $\overline{d}_t$ over training steps.

\section{Conclusion}
\label{sec:conclusion}

We introduced \method{}, a self-distillation framework based on selective adoption of teacher guidance: instead of forcing the student to imitate the privileged teacher at every token, \method{} constructs a reverse-KL barycenter target that adaptively balances teacher guidance with the student's own reasoning capacity. Our analysis shows that the disagreement-dependent barycenter weight directly modulates the teacher-induced signal in the training gradient: low-disagreement tokens retain dense teacher supervision, whereas high-disagreement tokens receive attenuated privileged guidance. We formalized this behavior through leakage attenuation and exploration preservation, showing how the proposed learning target reduces pressure to imitate privileged information while maintaining higher-entropy supervision. Empirically, \method{} improves over SDPO and GRPO across four scientific domains, maintains 35--98\% higher training entropy, and generalizes robustly to out-of-distribution benchmarks.


\bibliography{main}

\newpage
\appendix

\section{Detailed Proofs}
\label{app:proofs}

\subsection{Complete Proof of Theorem~\ref{thm:leakage}}
\label{app:proof-leakage}

\begin{proof}
We fix a token position $t$ and suppress the expectations over $x$ and $\hat{y}$ for clarity. Recall the notation: $\pi_\theta^t(v) = \pi_\theta(v \mid x, \hat{y}_{<t})$, $\pi_T^t(v, y^*) = \pi_{\bar{\theta}}(v \mid x, y^*, \hat{y}_{<t})$, $\pi_S^t(v) = \pi_{\bar{\theta}}(v \mid x, \hat{y}_{<t})$, and $\Delta_t(v) = \log \pi_T^t(v, y^*) - \log \pi_S^t(v)$.

Following \citet{yang2026rlsd}, the leakage at position $t$ is driven by the $\nabla_\theta \EE_{\pi_\theta^t}[\Delta_t]$ term, which is the component that carries $y^*$-dependent information and pushes the student to encode privileged correlations. In standard OPSD, this term enters with coefficient $1$, yielding a per-position leakage contribution proportional to $\|\Delta_t\|^2$. In \method{}, the same term enters with coefficient $(1-\alpha_t)$, yielding per-position contribution $(1-\alpha_t)^2 \|\Delta_t\|^2$.

Since $\alpha_t = (\sigma(\beta \cdot d_t) - 0.5) \cdot 2 \cdot \alpha_{\max} > 0$ whenever $d_t > 0$, we have $(1-\alpha_t) < 1$ on all positions with nonzero disagreement. Taking expectations over positions:
\begin{align}
\mathcal{R}_{\mathrm{leak}}^{\mathrm{DemoPSD}} &= \EE_t\!\left[(1-\alpha_t)^2 \|\Delta_t\|^2\right] \nonumber \\
&= \Pr(d_t = 0)\cdot \EE\!\left[\|\Delta_t\|^2 \mid d_t = 0\right] + \Pr(d_t > 0)\cdot \EE\!\left[(1-\alpha_t)^2 \|\Delta_t\|^2 \mid d_t > 0\right] \nonumber \\
&< \Pr(d_t = 0)\cdot \EE\!\left[\|\Delta_t\|^2 \mid d_t = 0\right] + \Pr(d_t > 0)\cdot \EE\!\left[\|\Delta_t\|^2 \mid d_t > 0\right] \nonumber \\
&= \EE_t\!\left[\|\Delta_t\|^2\right] = \mathcal{R}_{\mathrm{leak}},
\end{align}
where the strict inequality uses $(1-\alpha_t)^2 < 1$ on $\{d_t > 0\}$, which has positive probability by assumption. Since $\alpha_t$ is monotonically increasing in $d_t$ and $d_t = \mathrm{JSD}(\pi_S^t \| \pi_T^t)$ correlates with $\|\Delta_t\|$, the attenuation factor $(1-\alpha_t)^2$ is smallest at positions with the largest $\|\Delta_t\|$, concentrating the suppression where it is most needed. \qedhere
\end{proof}

\subsection{Complete Proof of Theorem~\ref{thm:bestofboth}}
\label{app:proof-bestofboth}

\begin{proof}
We fix a token position $t$, using the same notation as in Appendix~\ref{app:proof-leakage}.

\vspace{3pt}
\noindent\textbf{Step 1: Exponential family structure.}
Write $q_{\gamma}^{t}(v) = \pi_S^t(v)\,e^{\gamma\,\Delta_t(v)}/Z_\gamma$ where $Z_\gamma = \sum_v \pi_S^t(v)\,e^{\gamma\,\Delta_t(v)}$ is the partition function and $\Delta_t(v) = \log \pi_T^t(v, y^*) - \log \pi_S^t(v)$. This is a one-parameter exponential family with  parameter $\gamma$, sufficient statistic $\Delta_t(v)$, and base measure $\pi_S^t$.

At the boundary values: $q_0(v) = \pi_S^t(v)$ and $q_1(v) = \pi_S^t(v) e^{\Delta_t(v)}/Z_1 = \pi_T^t(v, y^*)$. The \method{} target corresponds to $\gamma = 1-\alpha_t$, so $q_{1-\alpha_t} = \pi_{\text{target}}^{\alpha_t}$.

Standard exponential family identities give:
\begin{equation}
\frac{d\log Z_\gamma}{d\gamma} = \EE_{q_{\gamma}^{t}}[\Delta_t], \qquad \frac{d^2\log Z_\gamma}{d\gamma^2} = \mathrm{Var}_{q_{\gamma}^{t}}[\Delta_t] \geq 0.
\end{equation}

A key property we will use: for any function $f\colon \mathcal{V} \to \mathbb{R}$,
\begin{equation}
\frac{d}{d\gamma}\EE_{q_{\gamma}^{t}}[f] = \mathrm{Cov}_{q_{\gamma}^{t}}(f,\,\Delta_t). \label{eq:app-exp-cov}
\end{equation}

\vspace{3pt}
\noindent\textbf{Step 2: Entropy derivative.}
Since $\log q_{\gamma}^{t}(v) = \log \pi_S^t(v) + \gamma\,\Delta_t(v) - \log Z_\gamma$, the entropy is:
\begin{equation}
\Ent(q_{\gamma}^{t}) = -\EE_{q_{\gamma}^{t}}[\log q_{\gamma}^{t}] = -\EE_{q_{\gamma}^{t}}[\log \pi_S^t] - \gamma\,\EE_{q_{\gamma}^{t}}[\Delta_t] + \log Z_\gamma.
\end{equation}
Differentiating each term with respect to $\gamma$:
\begin{align}
\frac{d}{d\gamma}\big(-\EE_{q_{\gamma}^{t}}[\log \pi_S^t]\big) &= -\mathrm{Cov}_{q_{\gamma}^{t}}(\log \pi_S^t,\,\Delta_t), \label{eq:app-term1} \\
\frac{d}{d\gamma}\big(-\gamma\,\EE_{q_{\gamma}^{t}}[\Delta_t]\big) &= -\EE_{q_{\gamma}^{t}}[\Delta_t] - \gamma\,\mathrm{Var}_{q_{\gamma}^{t}}[\Delta_t], \label{eq:app-term2} \\
\frac{d\log Z_\gamma}{d\gamma} &= \EE_{q_{\gamma}^{t}}[\Delta_t]. \label{eq:app-term3}
\end{align}
\eqref{eq:app-term1} uses \eqref{eq:app-exp-cov} with $f = \log \pi_S^t$ and \eqref{eq:app-term2} uses the product rule and \eqref{eq:app-exp-cov} with $f = \Delta_t$. Summing \eqref{eq:app-term1}--\eqref{eq:app-term3}, the $\EE_{q_{\gamma}^{t}}[\Delta_t]$ terms cancel such that
\begin{equation}
\frac{d\,\Ent(q_{\gamma}^{t})}{d\gamma} = -\mathrm{Cov}_{q_{\gamma}^{t}}(\Delta_t,\,\log \pi_S^t) - \gamma\,\mathrm{Var}_{q_{\gamma}^{t}}[\Delta_t]. \label{eq:app-entropy-deriv}
\end{equation}

\vspace{3pt}
\noindent\textbf{Step 3: Monotonicity under the covariance condition.}
Under condition \eqref{eq:cov-condition}, both terms in \eqref{eq:app-entropy-deriv} are non-positive for $\gamma > 0$:
\begin{itemize}[leftmargin=*,itemsep=1pt]
\item $-\gamma\,\mathrm{Var}_{q_{\gamma}^{t}}[\Delta_t] \leq 0$, with strict inequality when $\gamma > 0$ and $\Delta_t$ is nonconstant (i.e., $\pi_T^t \neq \pi_S^t$);
\item $-\mathrm{Cov}_{q_{\gamma}^{t}}(\Delta_t,\,\log \pi_S^t) \leq 0$ by the condition.
\end{itemize}
Hence $\frac{d\,\Ent(q_{\gamma}^{t})}{d\gamma} \leq 0$ for all $\gamma \in [0,1]$, with strict inequality on $(0,1]$ when $\pi_T^t \neq \pi_S^t$.

\vspace{3pt}
\noindent\textbf{Step 4: Entropy ordering.}
Since $\Ent(q_{\gamma}^{t})$ is strictly decreasing on $[0,1]$ when $\pi_T^t \neq \pi_S^t$:
\begin{equation}
\Ent(\pi_S^t) = \Ent(q_0) > \Ent(q_{1-\alpha_t}) = \Ent(\pi_{\text{target}}^{\alpha_t}) > \Ent(q_1) = \Ent(\pi_T^t),
\end{equation}
where the strict inequalities require $0 < \alpha_t < 1$ (so that $0 < 1-\alpha_t < 1$, placing the \method{} target strictly between the two endpoints) and $\pi_T^t \neq \pi_S^t$. \qedhere
\end{proof}

\section{Implementation Details}
\label{app:implementation}

\vspace{3pt} \noindent\textbf{Top-$k$ Distillation.} We extract top-$k=100$ tokens from the student's logits, compute both teacher probabilities on this same subset, and aggregate remaining mass into a tail bucket. This reduces memory from $O(|\mathcal{V}|)$ to $O(k)$ per position. The student's top-$k$ indices are shared with both teacher forwards, ensuring all three distributions are index-aligned.

\vspace{3pt} \noindent\textbf{Probability Floor.} All teacher log-probabilities are clamped: $\log p(v) \leftarrow \max(\log p(v), \log 10^{-8})$ to prevent numerical issues in the geometric mixture computation.

\vspace{3pt} \noindent\textbf{Importance Sampling Clip.} To stabilize training across PPO minibatches, we clip the importance sampling ratio: $\rho = \min(\exp(\log \pi_\theta(y_t) - \log \pi_{\theta_{\text{old}}}(y_t)), 2.0)$.


\vspace{3pt} \noindent\textbf{EMA Schedule.} The unprivileged reference uses EMA rate $\eta = 0.05$, updated once after all minibatches complete within a training step.


\vspace{3pt} \noindent\textbf{Masking.} Only response tokens are included in the loss ($\mathcal{T}$ excludes prompt tokens). Samples without a valid reprompt (demopsd\_mask = 0) have their loss contribution zeroed.

\vspace{3pt} \noindent\textbf{Privileged Context Truncation.} When the privileged prompt (question + correct solution + student response) exceeds the maximum reprompt length (10,240 tokens), the demonstration is truncated from the right, preserving the system/user prefix. This is a deliberate departure from SDPO's error-on-overflow behavior, ensuring training proceeds even with long demonstrations.

\end{document}